\documentclass[journal]{IEEEtran}
\usepackage{amsmath,amssymb,amsfonts}
\usepackage{cases}
\usepackage{color,xcolor}
\usepackage{graphicx}
\usepackage{cite}
\usepackage{subfigure}
\usepackage{multirow}
\usepackage{rotating}
\usepackage{booktabs}
\usepackage{tikz}
\usepackage{acronym}
\usepackage{epstopdf}
\usepackage{pifont} 
\usepackage[demo,
            export]{adjustbox}  
\usepackage{tabularray}
\usepackage{colortbl}
\usepackage[colorlinks,
linkcolor=blue,
anchorcolor=blue,
citecolor=blue
]{hyperref}
\usepackage{amssymb}
\usepackage{array} 
\usepackage{catchfilebetweentags}
\usepackage[ruled, vlined]{algorithm2e}
\newcolumntype{Z}{>{\centering\arraybackslash}p{1.2cm}}
\newcolumntype{W}{>{\centering\arraybackslash}p{1.5cm}}

\usepackage{diagbox}
\usepackage{multicol}
\usepackage{algpseudocode}
\usepackage{algorithmicx}

\newtheorem{Theorem}{Theorem}[section]
\newtheorem{theorem}[Theorem]{Theorem} 

\newtheorem{remark}[Theorem]{Remark}   

\begin{document}

\title{Efficient Personalized Federated PCA \\ with Manifold Optimization for  \\IoT Anomaly Detection}

\author{Xianchao Xiu, Chenyi Huang, Wei Zhang, and Wanquan Liu, \IEEEmembership{Senior Member,~IEEE}
\thanks{This work was supported in part by the National Natural Science Foundation of China under Grant 12371306. (\textit{Corresponding author: Wei Zhang}.)}
\thanks{Xianchao Xiu, Chenyi Huang, and Wei Zhang are with the School of Mechatronic Engineering and Automation, Shanghai University, Shanghai 200444, China (e-mail: xcxiu@shu.edu.cn; huangchenyi@shu.edu.cn; wzhang@shu.edu.cn).}
\thanks{Wanquan Liu is with the School of Intelligent Systems Engineering, Sun Yat-sen University, Guangzhou 510275, China (e-mail: liuwq63@mail.sysu.edu.cn).}
}

\maketitle

\begin{abstract}
Internet of things (IoT) networks face increasing security threats due to their distributed nature and resource constraints. 
Although federated learning  (FL) has gained prominence as a privacy-preserving framework for distributed IoT environments, current federated principal component analysis (PCA) methods lack the integration of personalization and robustness, which are critical for effective anomaly detection. 
To address these limitations, we propose an efficient personalized federated PCA (FedEP) method for anomaly detection in IoT networks. 
The proposed model achieves personalization through introducing local representations with the $\ell_1$-norm for element-wise sparsity, while maintaining robustness via enforcing local models with the $\ell_{2,1}$-norm for row-wise sparsity.
To solve this non-convex problem, we develop a manifold optimization algorithm based on the alternating direction method of multipliers (ADMM) with rigorous theoretical convergence guarantees. 
Experimental results confirm that the proposed FedEP outperforms the state-of-the-art FedPG, achieving excellent F1-scores and accuracy in various IoT security scenarios. 
Our code will be available at \href{https://github.com/xianchaoxiu/FedEP}{https://github.com/xianchaoxiu/FedEP}.
\end{abstract}

\begin{IEEEkeywords}
Internet of things, federated learning, principal component analysis, manifold optimization
\end{IEEEkeywords}


\section{Introduction}
\IEEEPARstart{I}{nternet} of things (IoT) has emerged as the foundation of modern wireless communication, bridging the gap between cyber and physical systems  \cite{aouedi2024survey}.
By facilitating data-driven autonomous decision-making, IoT technologies are driving digital transformation in a wide range of domains, such as smart cities \cite{rejeb2022big}, precision agriculture \cite{zhang2025internet}, and next-generation healthcare \cite{bollineni2025iot}.
However, this ubiquitous connectivity makes IoT ecosystems inherently vulnerable to various sophisticated cyber threats \cite{yalli2025systematic, tu2025distributed}.

As a primary line of defense, anomaly detection plays a critical role in building resilient security frameworks in IoT ecosystems \cite{huang2025deep}. 
It identifies diverse types of malicious traffic via comparative  analysis \cite{inuwa2024comparative}, while addressing the evolving challenges of real-time detection of potential security breaches \cite{adhikari2024recent}. 
Traditional anomaly detection frameworks adhere to centralized architectures, which necessitate the backhaul of massive volumes of raw data to cloud data centers for intensive processing \cite{zhou2017security}. 
However, such architectures encounter significant bottlenecks in practical IoT deployments.
Specifically, the computational complexity of advanced detection models often exceeds the stringent resource constraints of edge devices \cite{trilles2024anomaly}. 
Furthermore, the sensitive nature of user-generated data subjects direct data sharing to rigorous legal and ethical scrutiny, notably under frameworks such as the general data protection regulation (GDPR) \cite{kounoudes2020mapping}. 
Thus, there is an urgent need for lightweight, distributed, and privacy-preserving solutions capable of providing robust detection and upholding data sovereignty.

In the last decade, various methodologies have been proposed in the literature. 
Centralized methods leverage complex architectures, such as dual auto-encoder generative adversarial networks (GANs) for industrial inspection, statistical analysis-driven autoencoders \cite{ieracitano2020novel}, and innovative hybrid deep learning frameworks designed for complex IoT network environments \cite{zulfiqar2024deepdetect}, achieving significant detection accuracy. 
However, their reliance on centralized repositories poses significant privacy risks. 
In this regard, federated learning (FL) stands out as a promising distributed architecture that mitigates data silos and enhances privacy by enabling local model training on edge devices and aggregating only model parameters or gradients at a central server \cite{mcmahan2017communication,nguyen2021federated}. 
Although extensive research has applied FL to IoT intrusion detection, including group-based FL methods such as FedGroup tailored for heterogeneous environments \cite{zhang2024privacy}, these deep learning-based FL methods often incur prohibitive computational and communication overhead.
Their massive parameter counts frequently overwhelm resource-constrained edge IoT devices \cite{tang2022computational}, where resource-intensive backpropagation processes further challenge the feasibility of existing computation offloading strategies \cite{han2019federated}. 
Therefore, the limited computational capacity and constrained battery life of sensors necessitate more lightweight reconstructed models to maintain operational efficiency \cite{zhou2024reconstructed}.

Recently, linear models like principal component analysis (PCA) have regained prominence for their efficiency in feature extraction and dimensionality reduction \cite{carter2022fast,xiu2025bi}. 
To preserve privacy, recent research has pivoted toward federated PCA (FedPCA), which enables cooperative subspace estimation without requiring raw data exchange \cite{grammenos2020federated, nguyen2024federated}.
However, existing FedPCA  encounters critical bottlenecks when deployed in heterogeneous IoT environments.
On the one hand, the necessity of personalization is ignored or underestimated \cite{wang2023high}. 
IoT data exhibits inherent non-independent and identically distributed (non-IID) characteristics in diverse sensors and locations \cite{luo2024privacy}, enforcing a monolithic global model leads to a suboptimal fit for local distributions, thereby reducing detection sensitivity at the gateway level \cite{hoang2018pca}. 
On the other hand, robustness to data corruption is often insufficient \cite{chen2021bridging}. 
Classical PCA is sensitive to noise and outliers, thus without robust mechanisms, malicious perturbations or sensor malfunctions in raw telemetry can significantly distort the estimated subspace \cite{rousseeuw2018anomaly,liu2025similarity}.

\begin{figure}[t]
    \centering  
    \includegraphics[width=7cm]{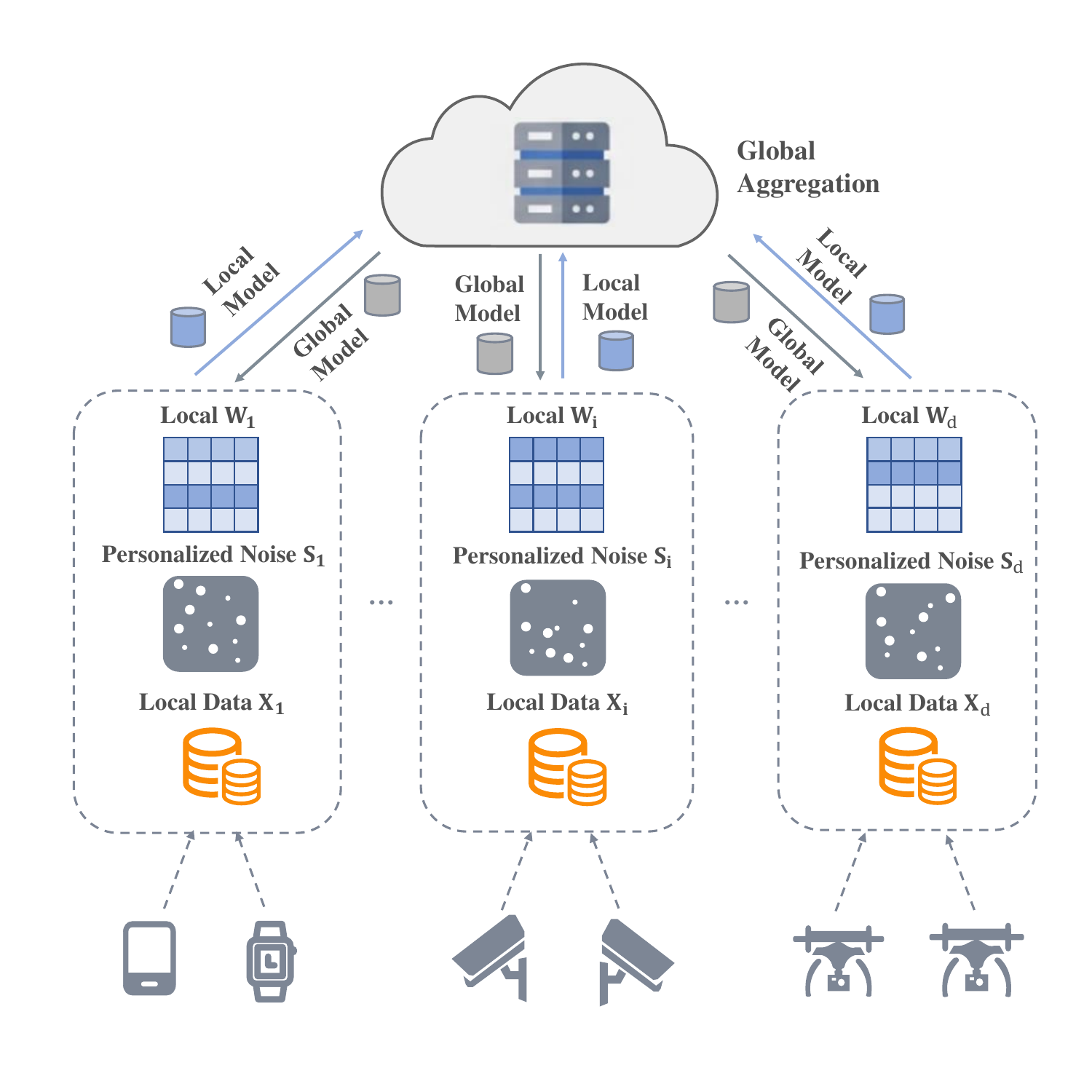}  
    \vspace{-0.6cm}
    \caption{Framework of the proposed FedEP.}
    \label{framework}
\end{figure}

To address these issues, we propose an efficient personalized federated PCA (FedEP) framework for IoT anomaly detection, whose overall structure is depicted in Fig.~\ref{framework}. 
The core idea is to formulate the local data of each client as a superposition of a low-rank component and a sparse component. 
The low-rank component captures normal behavior patterns of the device, incorporating both globally shared and locally personalized features, while the sparse component robustly absorbs noise and isolates potential anomalies. 
Our model uniquely integrates personalization with a bi-sparse regularization scheme, where the $\ell_{2,1}$-norm is utilized to enforce row-wise sparsity for structural robustness \cite{sun2023learning} and the $\ell_1$-norm provides element-wise sparsity to filter fine-grained noise \cite{liu2024towards}.

The main contributions of this paper are as follows.
\begin{itemize}
    \item We propose a novel method for IoT anomaly detection, which allows each local model to capture its specific data distribution while preserving the consistency of the global structure. 
To the best of our knowledge, it is the first framework that integrates  personalization and robustness into the FedPCA framework.
    \item We develop an efficient algorithm based on the alternating direction method of multipliers (ADMM) and manifold optimization, 
whose subproblems admit fast solvers. Furthermore, we provide rigorous convergence analysis.
    \item We evaluate the proposed method on three real-world IoT datasets, and the experimental results illustrate that it achieves superior detection accuracy and lower resource overhead compared to state-of-the-art method.
\end{itemize}

The structure of this paper is outlined as follows. Section \ref{related} reviews related works. Section \ref{method} presents our proposed model and optimization algorithm. Section \ref{experiments} provides experimental results and analysis. Section \ref{conclusions} concludes this paper.

\section{Related Work}\label{related}
This section introduces basic notations, sparse PCA, and FedPCA for IoT anomaly detection.

\subsection{Notations}
Throughout this work, matrices are denoted by uppercase letters, vectors by boldface lowercase letters, and scalars by lowercase italic letters. The set of $m \times n$ real matrices is denoted by $\mathbb{R}^{m \times n}$. For any matrix $X \in \mathbb{R}^{m \times n}$, its $i$-th row and $ij$-th element are represented by $\mathbf{x}^i$ and $x_{ij}$, respectively.
In addition, the Frobenius norm, $\ell_{2,1}$-norm, and $\ell_{1}$-norm of $X$ is defined as $\|X\|_{\mathrm{F}} = \sqrt{\mathrm{Tr}(X^\top X)}$, $\|X\|_{2,1} = \sum_{i=1}^{m} \|\mathbf{x}^i\|_2$, and $\|X\|_{1} = \sum_{i,j} |x_{ij}|$, respectively.
The inner product between two matrices $A$ and $B$ is defined as $\langle A, B \rangle = \mathrm{Tr}(A^\top B)$, where $\mathrm{Tr}(\cdot)$ denotes the trace operator. 
Furthermore, the Stiefel manifold is denoted by $\mathrm{St}(n, m) = \{W \in \mathbb{R}^{n \times m} \mid W^{\top}W = I_m\}$, where $I_m$ is the $m \times m$ identity matrix. 
Additional notations will be introduced where necessary.

\subsection{Sparse PCA}
PCA is a foundational dimensionality reduction method in unsupervised learning, aiming to find an orthogonal projection matrix $W \in \mathbb{R}^{n \times m}$ ($m \ll n$) that minimizes the reconstruction residual. The mathematical model is 
\begin{equation}\label{pca_reconstruction}
    \begin{array}{cl}
         \displaystyle \min_{W}  & \| X - WW^{\top}X \|_{\text{F}}^2 \\
         \text{s.t.}& W^{\top}W = I_m.
    \end{array}
\end{equation}
Although the above PCA is efficient in practice, it generates a dense matrix $W$ that can be sensitive to irrelevant features. To overcome this problem, sparse PCA (SPCA) \cite{zou2006sparse} introduces the $\ell_1$-norm regularization term to encourage sparsity in $W$, which is described as
\begin{equation}\label{spca}
    \begin{array}{cl}
        \displaystyle \min_{W} & \| X - WW^{\top}X \|_\text{F}^2 + \beta \|W\|_{2,1} \\
        \text{s.t.}  & W^{\top}W = I_m,
    \end{array}
\end{equation}
where $\beta>0$ is the parameter to tune the sparsity of $W$. Another impressive work is robust sparse PCA (RSPCA) \cite{rousseeuw2018anomaly}, which addresses the challenge of gross corruptions by decomposing the data into a low-rank component and a sparse error matrix. The model is given by
\begin{equation}\label{rpca}
    \begin{array}{cl}
        \displaystyle  \min_{W, S} & \| (I - WW^{\top})(X - S) \|_\text{F}^2 + \alpha \|S\|_1 \\
        \text{s.t.}  & W^{\top}W = I_m,
    \end{array}
\end{equation}
where $S$ captures the outlier-induced noise, allowing the subspace $W$ to accurately represent the underlying data structure, and $\alpha>0$ is the parameter to balance the noise level of  $S$.

\subsection{FedPCA for IoT Anomaly Detection}
In the field of IoT, PCA-based anomaly detection identifies malicious traffic by projecting high-dimensional telemetry into a low-rank subspace. A sample $\mathbf{x}^i$ is marked as an anomaly if its reconstruction error 
\begin{equation}
S(\mathbf{x}^i) = \| (I - WW^{\top})\mathbf{x}^i\|_2^2
\end{equation}
exceeds a threshold \cite{trilles2024anomaly}. 
However, traditional PCA struggles to handle  malicious outliers and the privacy risks associated with centralized data aggregation.

To this end, FedPCA \cite{nguyen2024federated} was developed, with the FedPCA on Grassmann (FedPG) framework being a representative method that seeks a global consensus among $d$ gateways. Suppose that $X_i\in \mathbb{R}^{m \times n}$ is the local data in the $i$-th local client, and $W_i\in \mathbb{R}^{n \times m}$ is the corresponding projection matrix, then FedPG is given by
\begin{equation}\label{fed_pca_consensus}
    \begin{array}{cl}
        \displaystyle \min_{\{W_i\}, V} & \sum_{i=1}^d \| X_i - W_iW_{i}^{\top}X_i \|_\mathrm{F}^2 \\
        \text{s.t.} & W_i = V, \quad W_i^{\top} W_{i}=I_m, \quad \forall i \in [d],
    \end{array}
\end{equation}
where $[d]=\{1, 2, \cdots, d\}$, and $V$ is the global consensus variable. 
Compared to the aforementioned PCA in \eqref{pca_reconstruction}, FedPG demonstrates outstanding performance in IoT anomaly detection.
Here 

\section{The Proposed Method}\label{method}
\subsection{Problem Formulation}
In this paper, we propose an efficient personalized
federated PCA (FedEP)  by relaxing the consensus into a personalized objective with bi-sparse regularization, simultaneously enhancing local adaptability and structural robustness. The objective is formulated as
\begin{equation}\label{eq:proposed_method}
    \begin{array}{cl}
        \displaystyle \min_{\{W_i\}, \{S_i\}, V} \quad & \sum_{i=1}^d (\| (I - W_i W_{i}^{\top})(X_i - S_i) \|_\textrm{F}^2  \\
        & + ~\alpha \|S_i\|_1 + \beta \|W_i\|_{2,1})\\
        \textrm{s.t.} & W_i = V, \quad W_i^{\top} W_{i}=I, \quad \forall i \in [d].
    \end{array}
\end{equation}

Compared with existing frameworks like FedPG \cite{nguyen2024federated},  our FedEP offers two distinct advantages:
\begin{itemize}
    \item (\textit{Adaptive Personalization}) The sparse component $S_i$ absorbs local-specific noise and outliers, allowing the model to adapt to the non-IID nature of IoT data. 
   \item (\textit{Structural Robustness}) The $\ell_{2,1}$-norm on $W_i$ enforces row-wise sparsity, making it resilient to corrupted features while reducing the computation.
\end{itemize}

\subsection{Optimization Algorithm}
For the sake of description, denote
\begin{equation}
p(S_i,W_i)=\alpha \|S_i\|_1 + \beta \|W_i\|_{2,1}.
\end{equation}
To facilitate optimization, we introduce the auxiliary variable $U_i$, which represents the target of projection of the denoised and purified local data.
Then, problem \eqref{eq:proposed_method} is transformed into 
\begin{equation}\label{eq:reformulated_prob}
    \begin{array}{cl}
        \displaystyle \min_{\{W_i\}, \{S_i\}, \{U_i\}, V }  & \sum_{i=1}^d (\| (I - W_i W_{i}^{\top})U_i \|_\textrm{F}^2 + p(S_i,W_i)\\
        \textrm{s.t.} & X_i - S_i = U_i, \quad \forall i \in [d], \\
        & W_i - V = 0, \quad \forall i \in [d],\\
        & W_i^{\top} W_{i}=I, \quad \forall i \in [d].
    \end{array}
\end{equation}

According to the ADMM scheme, the global augmented Lagrangian function $\mathcal{L}$ can be decomposed into the sum of $d$ local functions as
\begin{equation}\label{lag}
    \begin{array}{cl}
		&\mathcal{L} (\{W_i\}, \{S_i\}, \{U_i\}, V; \{\Lambda_i\}, \{\Pi_i\}) \\
        &= \sum_{i=1}^d \mathcal{L}_i (W_i, S_i, U_i, V; \Lambda_i, \Pi_i),
    \end{array}
\end{equation}
where $\mathcal{L}_i$ is defined as
\begin{equation}\label{eq:Lagrangian}
    \begin{aligned}
        & \mathcal{L}_i (W_i, S_i, U_i, V; \Lambda_i, \Pi_i) \\ 
        & = \| (I - W_i W_{i}^{\top})U_i \|_\textrm{F}^2 + p(S_i,W_i) \\
        & + \langle \Lambda_i, X_i - S_i - U_i \rangle + \frac{\mu }{2} \|X_i - S_i - U_i\|_\textrm{F}^2 \\
        & + \langle \Pi_i, W_i - V \rangle + \frac{\nu}{2} \|W_i - V\|_\textrm{F}^2.
    \end{aligned}
\end{equation}
Here, $\Lambda_i$ and $\Pi_i$ are the Lagrange multipliers, and $\mu,\nu > 0$ are the penalty parameters. This decentralized structure ensures that each client $i$ can update its local variables $W_i$ independently, while global synchronization is only required for the consensus variable $V$, thereby preserving data privacy and reducing communication overhead.

%

\begin{algorithm*}[!th]
    \caption{FedEP via ADMM with Manifold Optimization}\label{admm}
    \SetAlgoLined
    \SetKwInput{Input}{Input}
    \SetKwInput{Initialize}{Initialize}
    \Input{Data $\{X_i\}$, parameters $\alpha, \beta, \mu, \nu,\gamma \in (0,1)$, threshold $\varepsilon$, dimension $m$,  step size $t > 0$,  and $K_{\max}, T_{\max}$}
    \Initialize{Set $k=0$, $W_i^0 \in \mathrm{St}(n,m)$, $S_i^0, U_i^0, V^0$, $\Lambda_i^0, \Pi_i^0$, $i \in [d]$}
    
    \While{$k < K_{\max}$}{
        \BlankLine
        \tcp*[l]{Local Update Stage}
        \For{every $i \in [d]$}{
            \underline{\textit{Update $W_i^{k+1}$}}\\
            Set $j=0$, initialize $W_i^{(0)}$ as $W_i^k$\\
            \While{$j < T_{\max}$}{
                Compute Euclidean gradient $\nabla H(W_i^{(j)})$ via \eqref{eq:grad_W}\\
                Obtain descent direction $D_i^{(j)}$ by solving $Q(\Lambda_i) = 0$ via the SSN method\\
                Set $\alpha = 1$\\
                \While{$F(\mathrm{Retr}_{W_i^{(j)}}(\alpha D_i^{(j)})) > F(W_i^{(j)}) - \frac{\alpha}{2t} \|D_i^{(j)}\|_\mathrm{F}^2$}{
                    $\alpha = \gamma \alpha$
                }
                Update  $W_i^{(j+1)} = \mathrm{Retr}_{W_i^{(j)}}(\alpha D_i^{(j)})$\\
                $j = j + 1$
            }
            Set $W_i^{k+1} = W_i^{(j)}$\\
            
            \BlankLine
            \underline{\textit{Update local variables}}\\
            Update $S_i^{k+1}$ via \eqref{sol:S}\\
            Update $U_i^{k+1}$ via  \eqref{sol:U}\\
            Update $\Lambda_i^{k+1}, \Pi_i^{k+1}$ via  \eqref{sol:Lambda}
        }
        
        \BlankLine
        \tcp*[l]{Global Aggregation Stage }
        Update global variable $V^{k+1}$ via \eqref{sol:V} \\
        $k = k + 1$
    }
    \textbf{Output:} Optimized global variable $V$
\end{algorithm*}

\subsubsection{Update $W_i$}
The subproblem with respect to $W_i$ can be formulated as
\begin{equation}\label{prob:W}
    \begin{array}{cl}
        \displaystyle \min_{W_i \in \mathrm{St}(n,m)}  & \| (I_n - W_i W_i^{\top}) U_i^k \|_{\mathrm{F}}^2 + \beta \|W_i\|_{2,1} \\
        & + ~\frac{\nu}{2} \|W_i - Z_i^k\|_{\mathrm{F}}^2,
    \end{array}
\end{equation}
where $Z_i^k = V^k - \Pi_i^k/\nu$.
The objective function in \eqref{prob:W} consists of two smooth components and a non-smooth regularization term, and subject to the non-convex Stiefel manifold, which makes it computationally challenging.
For simplicity, we reformulate problem \eqref{prob:W} as 
\begin{equation}
    \min_{W_i \in \mathrm{St}(n,m)} \quad F(W_i)= H(W_i) + g(W_i),
\end{equation}
where 
\begin{equation}
\begin{aligned}
H(W_i) &= \|U_i^k - W_i W_i^\top U_i^k\|_\mathrm{F}^2 + \frac{\nu}{2} \|W_i - Z_i^k\|_\mathrm{F}^2, \\
g(W_i) &= \beta \|W_i\|_{2,1}.
\end{aligned}
\end{equation}
Obviously, $H(W_i)$ is smooth, $g(W_i)$ is non-smooth.
The optimization process begins by deriving the Euclidean gradient of  $H(W_i)$. By expanding the Frobenius norm terms and exploiting the orthogonality property $W_i^\top W_i = I_m$, the gradient at the current iterate $W_i^k$ is obtained as
\begin{equation}
    \nabla H(W_i^k) = -2 U_i^k (U_i^k)^\top W_i^k + \nu (W_i^k - Z_i^k).
    \label{eq:grad_W}
\end{equation}
To deal with $g(W_i)$ while strictly satisfying the geometry of the Stiefel manifold, the alternating manifold proximal gradient method \cite{chen2020alternating} is considered. The direction $D_i$ is determined by solving 
\begin{equation}
    \begin{array}{cl}
        \displaystyle \min_{D_i}  & \langle \nabla H(W_i^k), D_i \rangle + \frac{1}{2t} \|D_i\|_\mathrm{F}^2 + \beta \|W_i^k + D_i\|_{2,1} \\
        \text{s.t.}   & D^\top W_i^k + (W_i^k)^\top D = 0,
    \end{array}
    \label{eq:subproblem}
\end{equation}
where $t > 0$ is the step size. Once the optimal direction $D_i^*$ is computed, the next iterate is obtained via a retraction operation $W_i^{k+1} = \mathrm{Retr}_{W_i^k}(\alpha D_i^*)$, where $\alpha$ is a step size determined by a backtracking line search to ensure sufficient descent.

Let $\Lambda_i \in \mathbb{R}^{m \times m}$ be the symmetric Lagrange multiplier associated with the tangent space constraint. The first-order optimality condition leads to a closed-form expression for $D_i$ in terms of $\Lambda_i$ as
\begin{equation}
    D_i(\Lambda_i) = \mathrm{prox}_{2,1}(B(\Lambda_i), t\beta) - W_i^k,
\end{equation}
where $B(\Lambda_i) = W_i^k - t(\nabla H(W_i^k) - W_i^k \Lambda_i)$, and $\mathrm{prox}_{2,1}$ is the proximal operator associated with the $\ell_{2,1}$-norm \cite{zhu2025sparse}. By substituting $D_i(\Lambda_i)$ back into the tangent space constraint, problem  \ref{eq:subproblem} reduces to finding the root of the following equation system
\begin{equation}
    Q(\Lambda_i) = D_i(\Lambda_i)^\top W_i^k + (W_i^k)^\top D_i(\Lambda_i) = 0.
\end{equation}
Inspired by \cite{xiao2018regularized}, the regularized semi-smooth Newton (SSN) method is applied to solve $Q(\Lambda_i) = 0$, which guarantees fast convergence for finding the optimal descent direction.

\subsubsection{Update \texorpdfstring{$S_i$}{Si}}
Through simple algebraic operations, the $S_i$-subproblem can be easily written as
\begin{equation}\label{prob:S}
    \min_{S_i} \quad  \alpha \|S_i\|_1 + \frac{\mu}{2} \|S_i - M_i^k\|_\textrm{F}^2,
\end{equation}
where $M_i^k = X_i - U_i^k + \Lambda_i^k/\mu$. According to \cite{donoho1995noising}, the closed-form solution is characterized by the following soft-thresholding operator
\begin{equation}\label{sol:S}
    S_i^{k+1} = \textrm{sign}(M_i^k) \odot \max(|M_i^k| - \alpha/\mu, 0),
\end{equation}
where $\odot$ means the element-wise product.

\subsubsection{Update \texorpdfstring{$U_i$}{Ui}}
Once $W_i^{k+1}$ and $ S_i^{k+1}$ have been obtained, the $U_i$-subproblem can be solved by 

\begin{equation}\label{prob:U}
    \begin{array}{cl}
        \displaystyle \min_{U_i}  & \| (I - W_i^{k+1}(W_i^{k+1})^{\top}) U_i \|_{\mathrm{F}}^2 \\
        & +~ \frac{\mu}{2} \| U_i - (X_i - S_i^{k+1} + \Lambda_i^k/\mu) \|_{\mathrm{F}}^2
    \end{array}
\end{equation}
Taking the derivative with respect to $U_i$, it is easy to obtain

\begin{equation}
    \begin{aligned}
        & 2 (I - W_i^{k+1}(W_{i}^{k+1})^{\top} ) U_i& \\ & +\mu  (U_i - (X_i - S_i^{k+1} + \Lambda_i^k/\mu) ) = 0.
    \end{aligned}
\end{equation}
Let $Y_i^k = X_i - S_i^{k+1} + \Lambda_i^k/\mu$, to speed up the calculation of matrix inversion, the famous Sherman-Morrison-Woodbury formula can be used \cite{hager1989updating}, thus the solution is
\begin{equation}\label{sol:U}
    U_i^{k+1} = ( \frac{\mu}{\mu + 2} I + \frac{2}{\mu + 2} W_i^{k+1} (W_i^{k+1})^{\top} ) Y_i^k.
\end{equation}

\subsubsection{Update \texorpdfstring{$V$}{V}}
Fixing other variables, the global $V$-subproblem can be simplified to
\begin{equation}
    \min_{V} \quad \sum_{i=1}^d \frac{\nu}{2} \|W_i^{k+1} - V + \Pi_i^k/\nu \|_\textrm{F}^2,
\end{equation}
which has the following closed-form solution
\begin{equation}
    V^{k+1} = \frac{1}{d} \sum_{i=1}^d ( W_i^{k+1}  + \Pi_i^k/\nu).
\end{equation}
According to \cite{boyd2011distributed}, it can be further reduced into
\begin{equation}\label{sol:V}
    V^{k+1} = \frac{1}{d} \sum_{i=1}^d  W_i^{k+1}.
\end{equation}

\subsubsection{Update \texorpdfstring{$\Lambda_i$ and $\Pi_i$}{Lambda-i and Pi-i}}
Finally, the Lagrange multipliers are calculated as 
\begin{equation}\label{sol:Lambda}
    \begin{aligned}
	\Lambda_i^{k+1} &=  \Lambda_i^k - \mu (X_i^{k+1} - S_i^{k+1} - U_i^{k+1}), \\
	\Pi_i^{k+1}     &=  \Pi_i^k - \nu (W_i^{k+1} - V^{k+1}).
    \end{aligned}
\end{equation}

Therefore, the overall iterative optimization scheme is summarized in Algorithm \ref{admm}.

\subsection{Convergence Analysis}    

Let $(\{W_i^k\}, \{S_i^k\}, \{U_i^k\}, V^k; \{\Lambda_i^k\}, \{\Pi_i^k\})$ be the generated sequence by Algorithm \ref{admm}. Then we establish the convergence result in the following theorem.
\begin{theorem}
\label{converge}
The augmented Lagrangian function sequence $\{\mathcal{L} (\{W_i^k\}, \{S_i^k\}, \{U_i^k\}, V^k; \{\Lambda_i^k\}, \{\Pi_i^k\})\}$ is nonincreasing.
\end{theorem}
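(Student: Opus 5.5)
We will follow the standard block-wise descent argument for ADMM. The change in $\mathcal{L}$ from iteration $k$ to $k+1$ is split into the contributions of the successive updates in Algorithm~\ref{admm}, in the order $W_i$, $S_i$, $U_i$, $V$, and then the multipliers:
\begin{align*}
&\mathcal{L}^{k+1}-\mathcal{L}^{k}
= \big[\mathcal{L}(W^{k+1},S^{k},U^{k},V^{k};\Lambda^k,\Pi^k)-\mathcal{L}^k\big]\\
&\quad+\big[\mathcal{L}(W^{k+1},S^{k+1},U^{k},V^{k};\Lambda^k,\Pi^k)-\mathcal{L}(W^{k+1},S^{k},U^{k},V^{k};\Lambda^k,\Pi^k)\big]\\
&\quad+\cdots+\big[\mathcal{L}^{k+1}-\mathcal{L}(W^{k+1},S^{k+1},U^{k+1},V^{k+1};\Lambda^k,\Pi^k)\big].
\end{align*}
Each primal bracket is then bounded above by a nonpositive quantity, and the final dual bracket is controlled by the primal gains.

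\textbf{$W_i$-step.} For fixed $(S_i^k,U_i^k,V^k,\Lambda_i^k,\Pi_i^k)$, the $W_i$-dependent part of $\mathcal{L}_i$ equals $F(W_i)$ in \eqref{prob:W} up to a constant. The Armijo-type backtracking in Algorithm~\ref{admm} guarantees
\[
F(W_i^{(j+1)})\le F(W_i^{(j)})-\tfrac{\alpha}{2t}\|D_i^{(j)}\|_{\mathrm F}^2 .
\]
This requires that the line search terminates. We would invoke the ManPG result of \cite{chen2020alternating}: $H$ is $L$-smooth on a neighborhood of the compact set $\mathrm{St}(n,m)$, and for $t\le 1/L$ the retraction satisfies a second-order bound, so some $\alpha>0$ is accepted. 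Summing over $j$ shows that the $W$-bracket is nonpositive.

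\textbf{$S_i$-, $U_i$- and $V$-steps.} These are exact minimizers of strongly convex subproblems. The $S$-subproblem is $\mu$-strongly convex, the $U$-subproblem is $\mu$-strongly convex with Hessian $2(I-WW^\top)+\mu I\succeq \mu I$, and the $V$-subproblem is $d\nu$-strongly convex. Hence each bracket is at most
\[
-\tfrac{\mu}{2}\|S_i^{k+1}-S_i^k\|_{\mathrm F}^2,\qquad
-\tfrac{\mu}{2}\|U_i^{k+1}-U_i^k\|_{\mathrm F}^2,\qquad
-\tfrac{d\nu}{2}\|V^{k+1}-V^k\|_{\mathrm F}^2,
\]
respectively.

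\textbf{Dual step and main obstacle.} The multiplier updates raise $\mathcal{L}$ by
\[
\tfrac{1}{\mu}\|\Lambda_i^{k+1}-\Lambda_i^k\|_{\mathrm F}^2+\tfrac{1}{\nu}\|\Pi_i^{k+1}-\Pi_i^k\|_{\mathrm F}^2 .
\]
Controlling this increase is the crux, because the dual ascent term has the wrong sign.

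For $\Lambda_i$, we use the optimality condition of the $U$-step. Taking \eqref{sol:Lambda} with the sign convention matching $\langle\Lambda_i,X_i-S_i-U_i\rangle$, that condition reads $\Lambda_i^{k+1}=2(I-W_i^{k+1}W_i^{k+1\top})U_i^{k+1}$. This gives
\[
\|\Lambda_i^{k+1}-\Lambda_i^k\|_{\mathrm F}\le 2\|U_i^{k+1}-U_i^k\|_{\mathrm F}+4\|W_i^{k+1}-W_i^k\|_{\mathrm F}\,\|U_i^k\|_{\mathrm F}.
\]
For $\Pi_i$, the $V$-update together with the multiplier update forces $\sum_i\Pi_i^{k+1}=0$. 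We would then bound $\Pi_i^{k+1}-\Pi_i^k$ through the optimality of the $W$-subproblem, i.e.\ the stationarity of $\nabla H$ plus the $\ell_{2,1}$ subgradient, by $C\|W_i^{k+1}-W_i^k\|_{\mathrm F}$.

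\textbf{Conclusion and caveat.} Substituting these bounds, nonincreasingness follows once $\mu$ and $\nu$ are large enough, say $\mu>8$ and $\nu$ dominating the Lipschitz constants, and once the $W$-step produces descent proportional to $\|W_i^{k+1}-W_i^k\|_{\mathrm F}^2$. The latter follows from $\|D\|\gtrsim\|W^{k+1}-W^k\|$ via the retraction bound. The weakest point is the $\Pi$ bound: $W_i$ is only an approximate stationary point after $T_{\max}$ inner steps. In that case we would assume the inner loop is run to stationarity, and state the penalty-size requirements explicitly as hypotheses.
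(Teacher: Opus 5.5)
Your $W_i$-, $S_i$-, $U_i$- and $V$-steps are the same block-descent estimates the paper uses in \eqref{nonincreasing-W}--\eqref{nonincreasing-V}; yours are more quantitative. The dual step, however, contains a genuine error: you replaced the multiplier update \eqref{sol:Lambda} by the standard dual-ascent sign. Algorithm~\ref{admm} uses $\Lambda_i^{k+1}=\Lambda_i^k-\mu(X_i-S_i^{k+1}-U_i^{k+1})$ and $\Pi_i^{k+1}=\Pi_i^k-\nu(W_i^{k+1}-V^{k+1})$. That is a minus sign relative to the terms $+\langle\Lambda_i,X_i-S_i-U_i\rangle$ and $+\langle\Pi_i,W_i-V\rangle$ in \eqref{eq:Lagrangian}. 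Because $\mathcal{L}$ is affine in the multipliers, the update as written gives exactly
\[
\begin{aligned}
\mathcal{L}^{k+1}-\widetilde{\mathcal{L}}^{k+1}
&=-\mu\sum_{i=1}^d\|X_i-S_i^{k+1}-U_i^{k+1}\|_{\mathrm F}^2\\
&\quad-\nu\sum_{i=1}^d\|W_i^{k+1}-V^{k+1}\|_{\mathrm F}^2\le 0,
\end{aligned}
\]
where $\widetilde{\mathcal{L}}^{k+1}$ is $\mathcal{L}$ at the new primal variables with the old multipliers $\{\Lambda_i^k\},\{\Pi_i^k\}$. This is precisely minus the increase $\tfrac{1}{\mu}\|\Lambda_i^{k+1}-\Lambda_i^k\|_{\mathrm F}^2+\tfrac{1}{\nu}\|\Pi_i^{k+1}-\Pi_i^k\|_{\mathrm F}^2$ you computed. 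So the multiplier step is itself a descent step and there is nothing to control. Chaining it with your primal brackets proves the theorem with no condition on $\mu,\nu$ and no exactness of the inner loop. That is the content of \eqref{dual-pi}, which the paper asserts without writing out this one-line computation. One caveat applies to both arguments: \eqref{sol:V} is the exact $V$-minimizer only if $\sum_i\Pi_i^k=0$. Since \eqref{sol:V} and \eqref{sol:Lambda} keep $\sum_i\Pi_i^k$ constant, initializing with $\sum_i\Pi_i^0=0$ suffices.

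You are right that this is not the usual ADMM dual ascent. But the theorem is about Algorithm~\ref{admm} as written, and under your sign it is false in general, so any argument along your lines must add hypotheses the statement does not contain. For instance, take $W_i^0=V^0$ equal to the first $m$ columns of $I_n$, which minimize $\|\cdot\|_{2,1}$ on $\mathrm{St}(n,m)$. Take $\Pi_i^0=0$, $S_i^0=0$, $U_i^0=W_i^0(W_i^0)^\top X_i$ and $\Lambda_i^0=-\mu(I-W_i^0(W_i^0)^\top)X_i$. Every primal block is then already optimal. Yet the plus-sign multiplier step raises $\mathcal{L}$ by $\mu\sum_i\|(I-W_i^0(W_i^0)^\top)X_i\|_{\mathrm F}^2>0$, for every $\mu,\nu>0$.

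Moreover, even granting large penalties, your $\Pi$ estimate does not close. $\Pi_i^{k+1}$ is formed after the $V$-update, while the stationarity condition of \eqref{prob:W} is taken at $(V^k,\Pi_i^k)$. That condition contains an element of $\beta\,\partial\|W_i^{k+1}\|_{2,1}$ and a normal-space term $W_i^{k+1}\Gamma$ with $\Gamma$ symmetric, and neither is Lipschitz in $W_i$. Hence $\|\Pi_i^{k+1}-\Pi_i^k\|_{\mathrm F}\le C\|W_i^{k+1}-W_i^k\|_{\mathrm F}$ does not follow, with or without inner stationarity. Your $\Lambda$ estimate also needs a uniform bound on $\|U_i^k\|_{\mathrm F}$ that you have not established.
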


\begin{IEEEproof}
Following a similar line as \cite{chen2020alternating}, the corresponding objectives of \eqref{prob:W} are nonincreasing. Thus, it is concluded that 
\begin{equation}
    \label{nonincreasing-W}
    \begin{array}{cl}
        & \mathcal{L}_i (W_i^{k+1}, S_i^k, U_i^k, V^k; \Lambda_i^k, \Pi_i^k) \\ 
        & \leq \mathcal{L}_i (W_i^k, S_i^k, U_i^k, V^k; \Lambda_i^k, \Pi_i^k), \quad \forall i \in [d].
    \end{array}
\end{equation}

Since the $S_i, U_i, V$-subproblems are all convex, the minimization steps satisfy the following inequalities
\begin{equation}
    \label{nonincreasing-V}
    \begin{aligned}
        &\mathcal{L}(W_i^{k+1}, S_i^{k+1}, U_i^{k+1}, V^{k+1}; \Lambda_i^k, \Pi_i^k) \\
        &\leq \mathcal{L} (W_i^{k+1}, S_i^{k+1}, U_i^{k+1}, V^k; \Lambda_i^k, \Pi_i^k)\\
        &\leq \mathcal{L}_i (W_i^{k+1}, S_i^{k+1}, U_i^k, V^k; \Lambda_i^k, \Pi_i^k)\\
        &\leq \mathcal{L}_i (W_i^{k+1}, S_i^k, U_i^k, V^k; \Lambda_i^k, \Pi_i^k), \quad \forall i \in [d].
    \end{aligned}
\end{equation}

Regarding the $\Lambda_i, \Pi_i$-subproblems, the following statements hold
\begin{equation}
    \label{dual-pi}
    \begin{aligned}
      &  \mathcal{L}(W_i^{k+1}, S_i^{k+1}, U_i^{k+1}, V^{k+1}; \Lambda_i^{k+1}, \Pi_i^{k+1}) \\
      &  \leq \mathcal{L} (W_i^{k+1}, S_i^{k+1}, U_i^{k+1}, V^{k+1}; \Lambda_i^{k+1}, \Pi_i^k)\\
             & \leq \mathcal{L} (W_i^{k+1}, S_i^{k+1}, U_i^{k+1}, V^{k+1}; \Lambda_i^k, \Pi_i^k), \quad \forall i \in [d].
    \end{aligned}
\end{equation}

Combining the above inequalities \eqref{nonincreasing-W}, \eqref{nonincreasing-V}, and \eqref{dual-pi}, it is easy to see that
\begin{equation}
    \label{final-nonincreasing}
    \begin{aligned}
        & \mathcal{L} (\{W_i^{k+1}\}, \{S_i^{k+1}\}, \{U_i^{k+1}\}, V^{k+1}; \{\Lambda_i^{k+1}\}, \{\Pi_i^{k+1}\}) \\
        & \leq \mathcal{L} (\{W_i^k\}, \{S_i^k\}, \{U_i^k\}, V^k; \{\Lambda_i^k\}, \{\Pi_i^k\}),
    \end{aligned}
\end{equation}
which shows that the generated augmented Lagrangian function $\{\mathcal{L} (\{W_i^k\}, \{S_i^k\}, \{U_i^k\}, V^k; \{\Lambda_i^k\}, \{\Pi_i^k\})\}$ is nonincreasing, and thus the proof is completed.
\end{IEEEproof}

\begin{remark}
Although \cite{nguyen2024federated} provides the convergence to stationary points, its objective function is smooth, which is clearly not satisfied for our model \eqref{eq:reformulated_prob}. Note that \cite{li2025riemannian} introduces the Riemannian ADMM with convergence guarantees, but its assumptions still do not apply to Algorithm \ref{admm}.
\end{remark}


\section{Experiments}\label{experiments}

To demonstrate the effectiveness, this section compares the proposed FedEP with the state-of-the-art FedPG \cite{nguyen2024federated} in an intrusion detection system (IDS) deployment.
The experiments are conducted on a server equipped with an Intel Ultra 9 Processor 285K, Ubuntu 22.04.4 LTS, 64GB RAM, and NVIDIA RTX 5090 GPU.

\subsection{Experimental Settings}

\begin{table}[t]
    \centering
    \caption{Statistics of the selected IoT datasets.} \label{dataset}
    \vspace{-1mm}
    \renewcommand\arraystretch{1.3}
    \setlength{\tabcolsep}{2pt} 
    \begin{tabular}{|c|c|c|c|c|}
        \hline
        Datasets & Features & Training Samples & Testing Samples & Classes \\
        \hline
        \hline
        TON-IoT & 49 & 114,956 & 66,557 & 10 \\
        \hline
        UNSW-NB15 & 39 & 65,000 & 65,332 & 10 \\
        \hline
        NSL-KDD & 34 & 125,973 & 22,544 & 5 \\
        \hline
    \end{tabular}
\end{table}

\subsubsection{Dataset Description}

The proposed FedEP is evaluated on  three real-world network intrusion detection datasets: TON-IoT \cite{booij2021ton_iot}, UNSW-NB15 \cite{moustafa2015unsw}, and NSL-KDD \cite{tavallaee2009detailed}. Detailed statistics are listed in Table \ref{dataset}.
 
\begin{itemize}
    \item \textbf{TON-IoT}\footnote{https://research.unsw.edu.au/projects/toniot-datasets}: It contains 49 features covering DDoS, DoS, and scanning attacks. After post-processing, it includes 114,956 training samples and 66,557 testing samples. 
    \item \textbf{UNSW-NB15}\footnote{https://research.unsw.edu.au/projects/unsw-nb15-dataset}: It is developed by ACCS, consisting of 39 features with hybrid real normal and synthetic modern attack scenarios. The training and testing datasets include 65,000 samples and 65,332 samples, respectively.
    \item \textbf{NSL-KDD}\footnote{https://www.unb.ca/cic/datasets/nsl.html}: It comprises 34 features across five attack categories (DoS, Probe, R2L, U2R, and Normal), with 125,973 training samples and 22,544 testing samples.
\end{itemize}

\begin{figure*}[t] 
	\centering
	
	\subfigure[TON-IoT]{
		\includegraphics[width=0.3\textwidth]{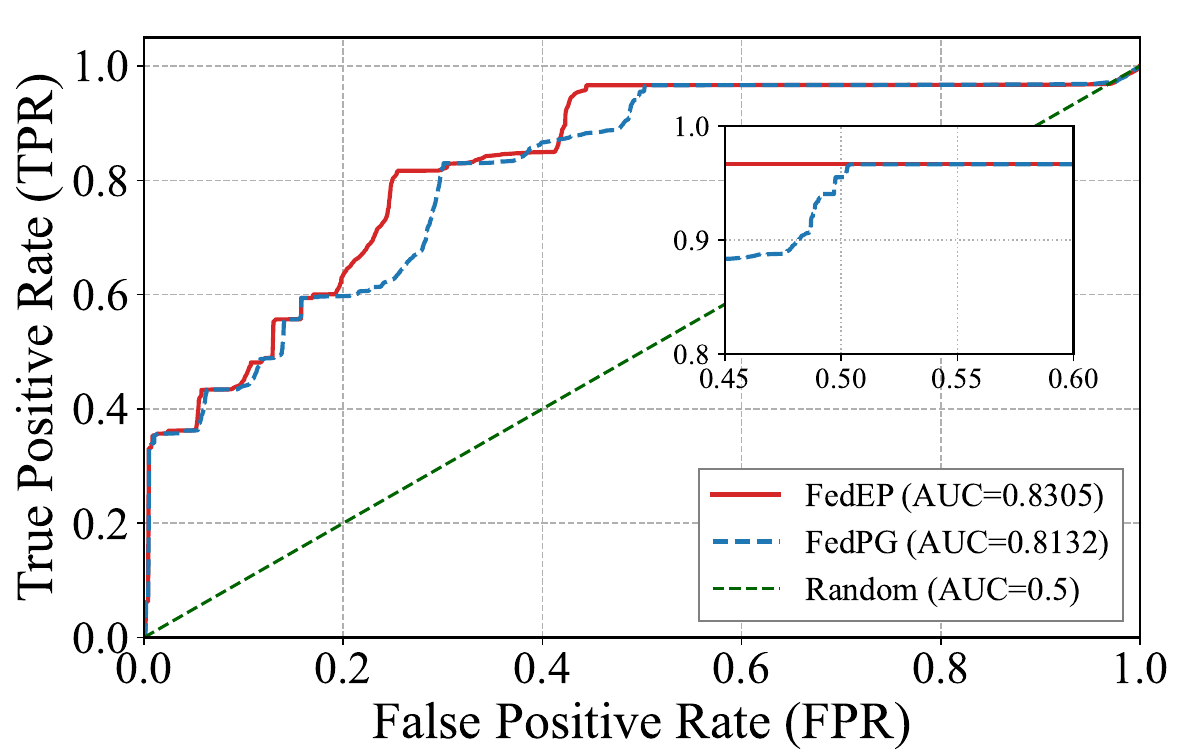}
		\label{fig:ton_roc}
	}\hfill 
	\subfigure[UNSW-NB15]{
		\includegraphics[width=0.3\textwidth]{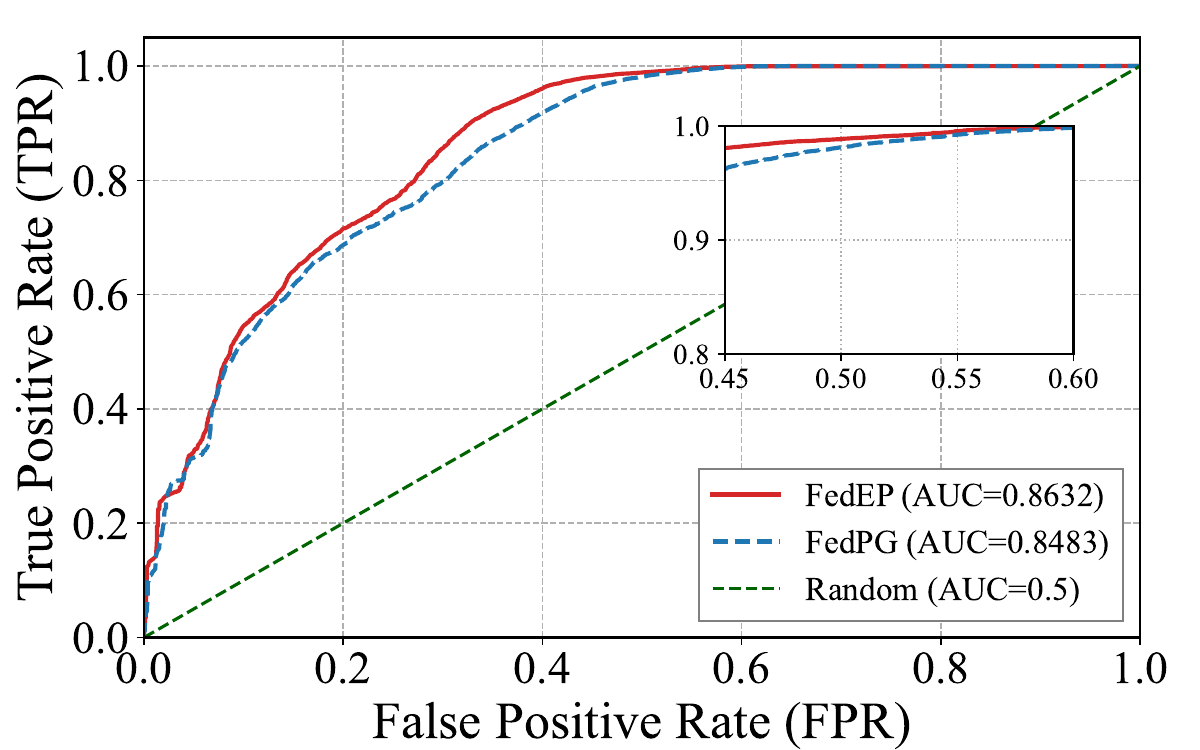}
		\label{fig:unsw_roc}
	}\hfill
	\subfigure[NSL-KDD]{
		\includegraphics[width=0.3\textwidth]{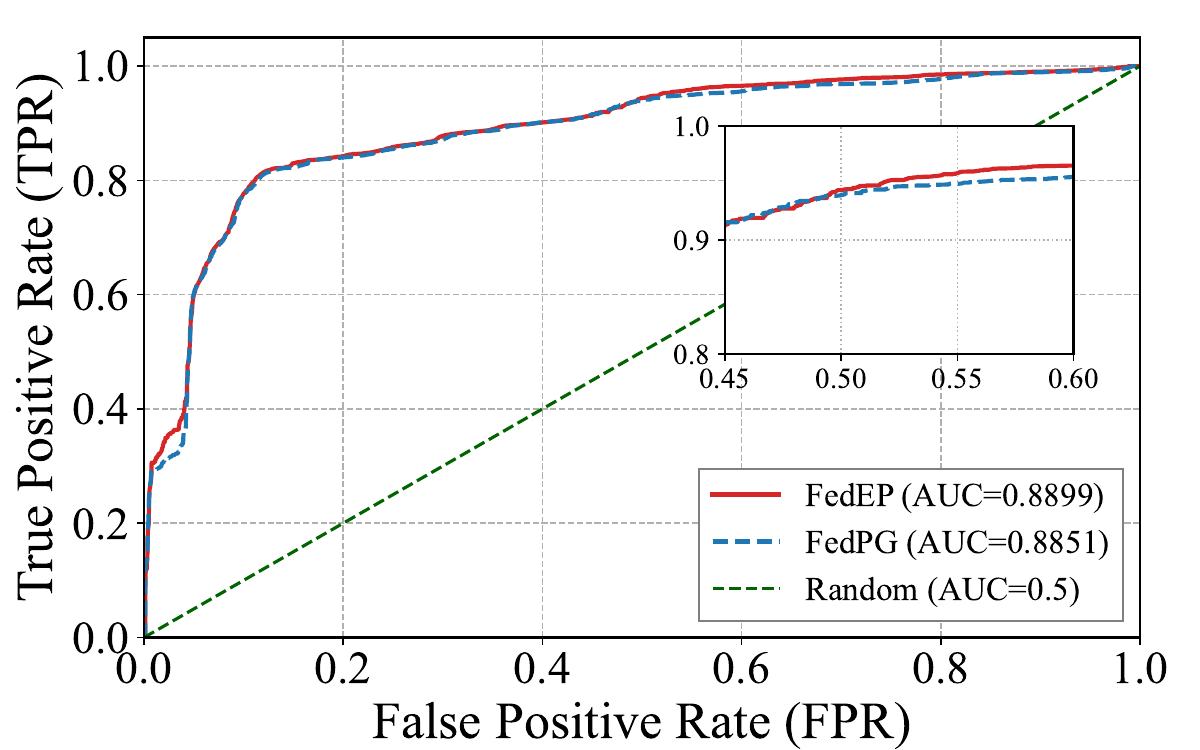}
		\label{fig:kdd_roc}
	}
    
	\vspace{-2mm}
	\caption{ROC curves on  (a) TON-IoT, (b) UNSW-NB15, (c) NSL-KDD.}
	\label{fig:roc_results}
\end{figure*}

\begin{figure*}[t]
	\centering
	\subfigure[TON-IoT]{
		\includegraphics[width=0.3\textwidth]{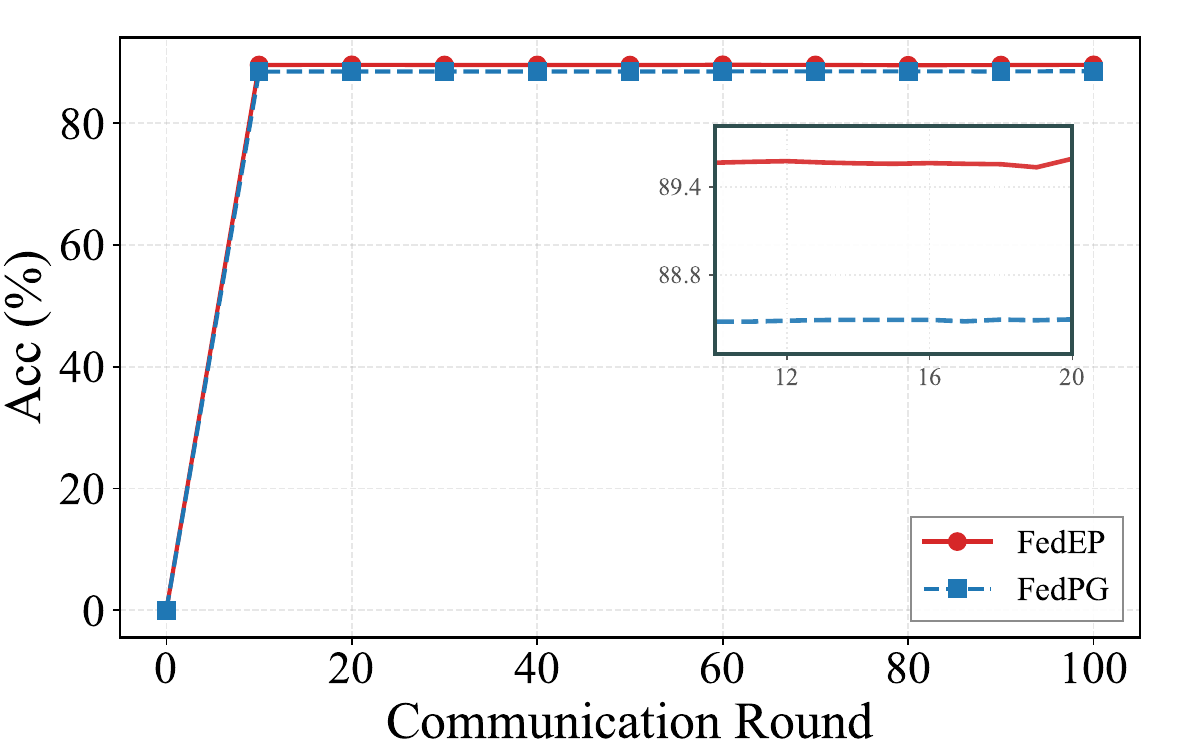}
		\label{fig:ton_acc}
	}\hfill 
	\subfigure[UNSW-NB15]{
		\includegraphics[width=0.3\textwidth]{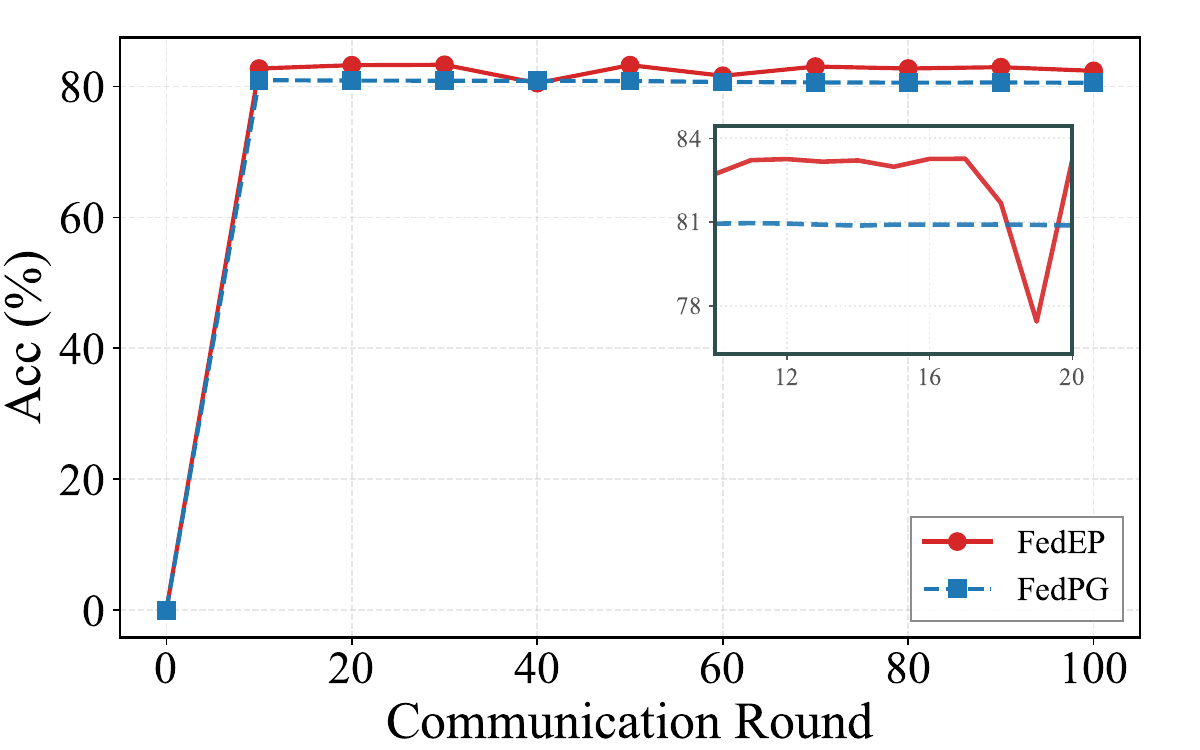}
		\label{fig:unsw_acc}
	}\hfill
	\subfigure[NSL-KDD]{
		\includegraphics[width=0.3\textwidth]{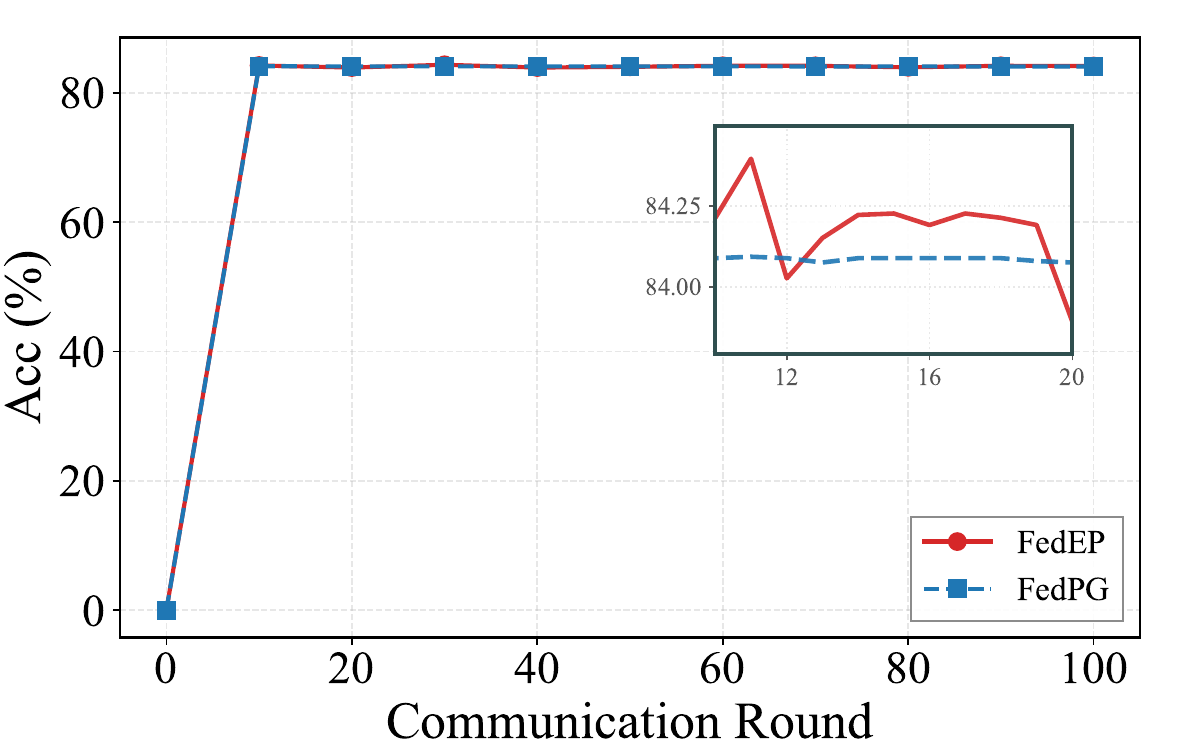}
		\label{fig:kdd_acc}
	}
	\vspace{-2mm}
	\caption{Accuracy  over communication rounds on  (a) TON-IoT, (b) UNSW-NB15, (c) NSL-KDD.}
	\label{fig:overall_results}
\end{figure*}

\subsubsection{Evaluation Metrics}

To evaluate the performance of compared methods, five standard metrics are utilized. Let $\textrm{TP}$, $\textrm{TN}$, $\textrm{FP}$, and $\textrm{FN}$ represent true positives, true negatives, false positives, and false negatives, respectively.

\begin{itemize}
    \item Accuracy ($\textrm{Acc}$): The overall proportion of correctly classified records, defined as
    \begin{equation}
        \textrm{Acc} = \frac{\textrm{TP} + \textrm{TN}}{\textrm{TP} + \textrm{TN} + \textrm{FP} + \textrm{FN}}.
    \end{equation}

    \item Precision ($\textrm{Pre}$): The ratio of correctly identified attacks to all predicted attack records, defined as
    \begin{equation}
        \textrm{Pre} = \frac{\textrm{TP}}{\textrm{TP} + \textrm{FP}}.
    \end{equation}

    \item Recall ($\textrm{Rec}$): The ratio of correctly detected attacks to all actual attack records, defined as
    \begin{equation}
        \textrm{Rec} = \frac{\textrm{TP}}{\textrm{TP} + \textrm{FN}}.
    \end{equation}

    \item False Negative Rate ($\textrm{FNR}$): The fraction of actual anomalies that are incorrectly classified as normal, defined as
    \begin{equation}
        \textrm{FNR} = \frac{\textrm{FN}}{\textrm{TP} + \textrm{FN}}.
    \end{equation}

    \item F1-score ($\textrm{F1-score}$): The harmonic mean of precision and recall, providing a balanced measure of model performance, defined as
    \begin{equation}
        \textrm{F1-score}  = \frac{2\textrm{TP}}{2\textrm{TP} + \textrm{FP} + \textrm{FN}}.
    \end{equation}
\end{itemize}
We would like to point out that larger Acc $(\uparrow)$, Pre $(\uparrow)$, Rec $(\uparrow)$, F1-score $(\uparrow)$ values, and smaller FNR $(\downarrow)$ values, indicate better anomaly detection results.

\subsubsection{Implementation Issues}
In the IDS deployment, local IoT devices connect to a gateway to collect data for anomaly detection. To reflect the diverse client traffic, the training samples for each dataset are divided into $20$ non-IID subsets based on representative features, dst$\_$bytesfor TON-IoT, UNSW-NB15, and NSL-KDD. The features are normalized by $z$-scores, and the hyperparameters are optimized by grid search.

\subsection{Performance Comparison}


\begin{table}[t]
    \centering
    \renewcommand\arraystretch{1.3}
    \caption{Detection performance on TON-IoT, UNSW-NB15, and NSL-KDD.}\label{tab2}
    
    \setlength{\tabcolsep}{3pt} 
    
    \begin{tabular}{|c|c|c|c|c|c|c|}
        \hline
        \multirow{2}{*}{\diagbox[width=1.5cm]{Metrics}{Datasets}} & \multicolumn{2}{c|}{TON-IoT} & \multicolumn{2}{c|}{UNSW-NB15} & \multicolumn{2}{c|}{NSL-KDD} \\
        \cline{2-7}
        & FedPG & FedEP & FedPG & FedEP & FedPG & FedEP \\
        \hline
        \hline
        Acc  ($\uparrow$)  & 88.89\% & \textbf{90.48\%} & 80.93\% & \textbf{83.31\%} & 84.09\% & \textbf{84.24\%} \\
        \hline
        Pre  ($\uparrow$)  & 90.84\% & \textbf{92.48\%} & 81.58\% & \textbf{82.17\%} & \textbf{89.78\%} & 89.66\% \\
        \hline
        Recall ($\uparrow$) & \textbf{96.67\%} & 96.65\% & 93.66\% & \textbf{97.00\%} & 81.30\% & \textbf{81.75\%} \\
        \hline
        FNR ($\downarrow$)   & \textbf{3.33\%}  & 3.35\%  & 6.34\%  & \textbf{3.00\%} & 18.70\% & \textbf{18.25\%} \\
        \hline
        F1-score  ($\uparrow$)   & 93.66\% & \textbf{94.52\%} & 87.20\% & \textbf{88.97\%} & 85.33\% & \textbf{85.52\%} \\
        \hline
    \end{tabular}
\end{table}

Unlike FedPG, which necessitates retrieving all records to a centralized server for global variable learning, FedEP preserves data locality through its personalized sparse framework. 
As summarized in Table \ref{tab2}, FedEP obtains comparable performance for all three datasets, particularly in the core metrics of accuracy and F1 score.
Specifically, on TON-IoT, FedEP achieves an accuracy of 90.48\% and an F1-score of 94.52\%, representing a steady improvement over FedPG. 
The advantage of the proposed method becomes more pronounced on the highly challenging UNSW-NB15, where FedEP yields a substantial 2.38\% gain in accuracy and a 1.77\% boost in F1-score. 
Similar results can also be found on NSL-KDD.
The performance improvement in various intrusion detection scenarios highlights the effectiveness of integrating personalized sparse regularization, which can capture heterogeneous local patterns and maintain high detection accuracy.

Furthermore,  Fig. \ref{fig:roc_results} shows the receiver operating characteristic (ROC) curves of compared methods. It is concluded that FedEP achieves higher area under the curve (AUC) values across all datasets, i.e., 0.8305 on TON-IoT, 0.8632 on UNSW-NB15, and 0.8899 on NSL-KDD, compared to  0.8132, 0.8483, and 0.8851 of FedPG, respectively. This indicates that FedEP has a better trade-off between true positive rate and false positive rate under different network environments. 

In addition, Fig. \ref{fig:overall_results} illustrates the evolution of model accuracy across global communication rounds.
It is found that on all three datasets, FedEP achieves its peak accuracy within the first 10 communication rounds and maintains a stable performance plateau thereafter. 
This confirms the effectiveness of personalized initialization in FedEP,  which allows the global model to be better fine-tuned for local distributions, thereby minimizing the number of rounds required to reach optimal performance in federated IoT systems.

\subsection{Personalized Analysis}

\subsubsection{Feature Importance}

\begin{figure*}[t]
    \centering
    \subfigure[TON-IoT]{
        \label{fig:ton_butterfly}
        \hspace{-5mm}
        \includegraphics[width=0.32 \textwidth]{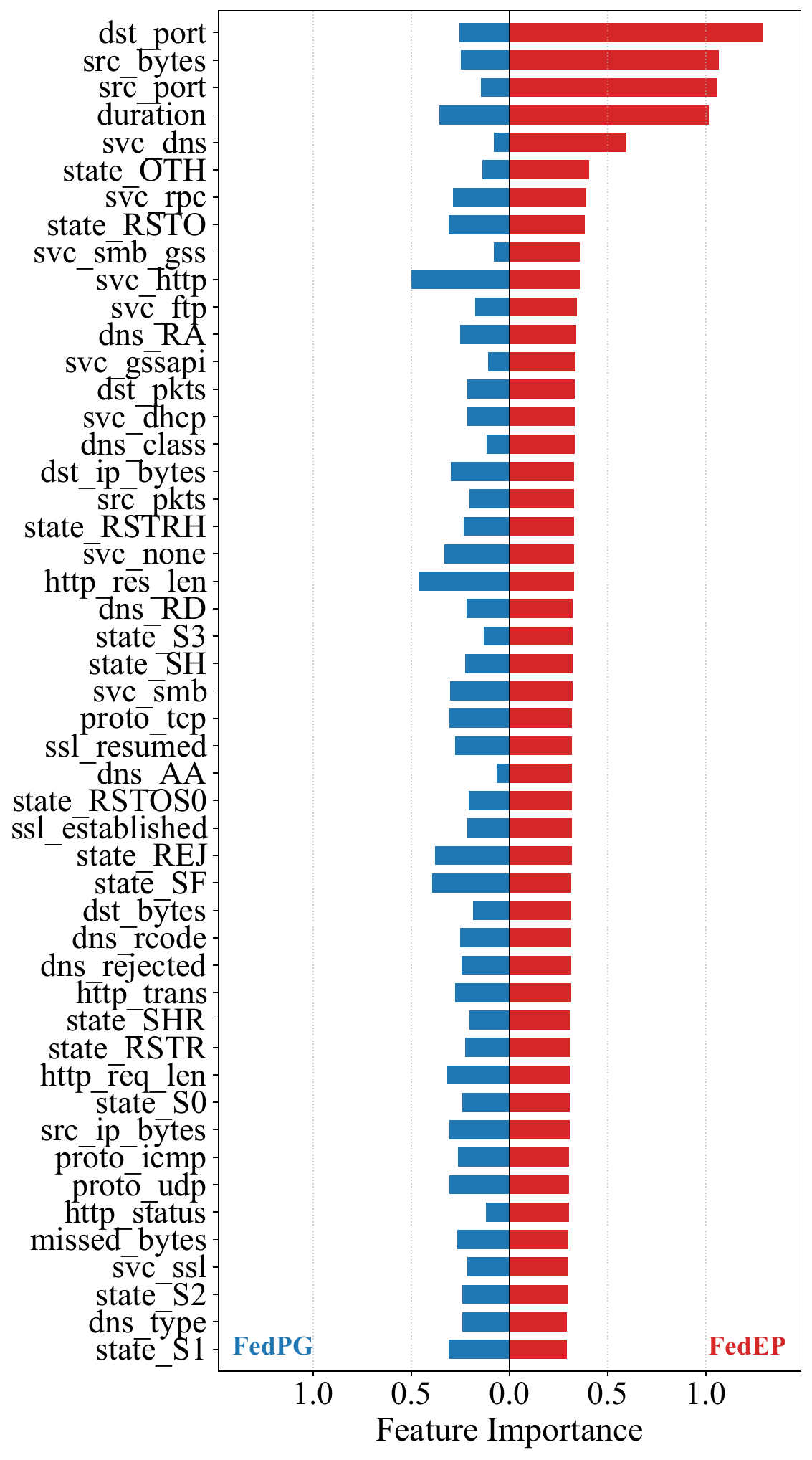}
    }\hfill
    \subfigure[UNSW-NB15]{
        \label{fig:unsw_butterfly}
        \hspace{-8mm}
        \includegraphics[width=0.32 \textwidth]{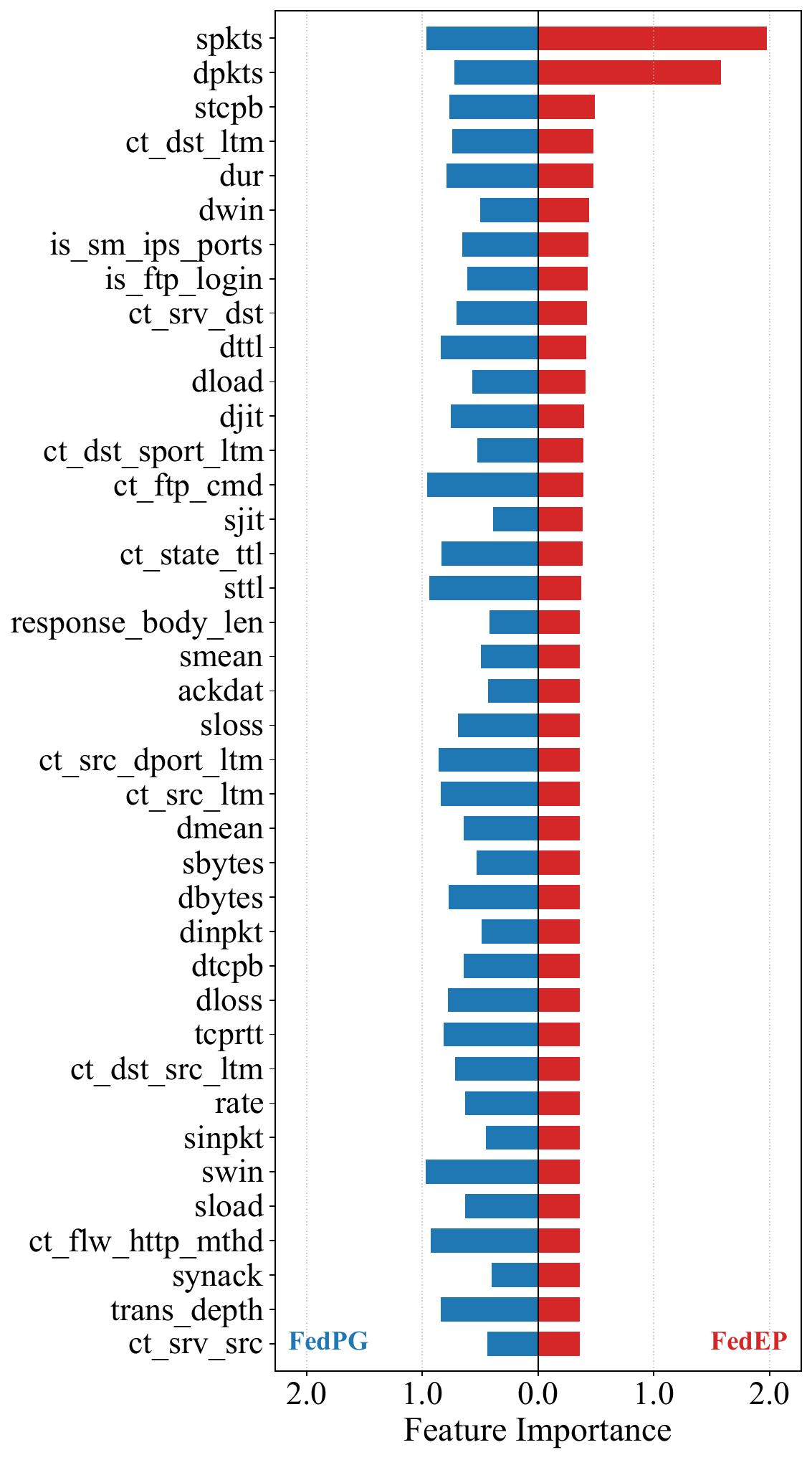}
    }\hfill
    \subfigure[NSL-KDD]{
        \label{fig:kdd_butterfly}
        \hspace{-12mm}
        \includegraphics[width=0.32 \textwidth]{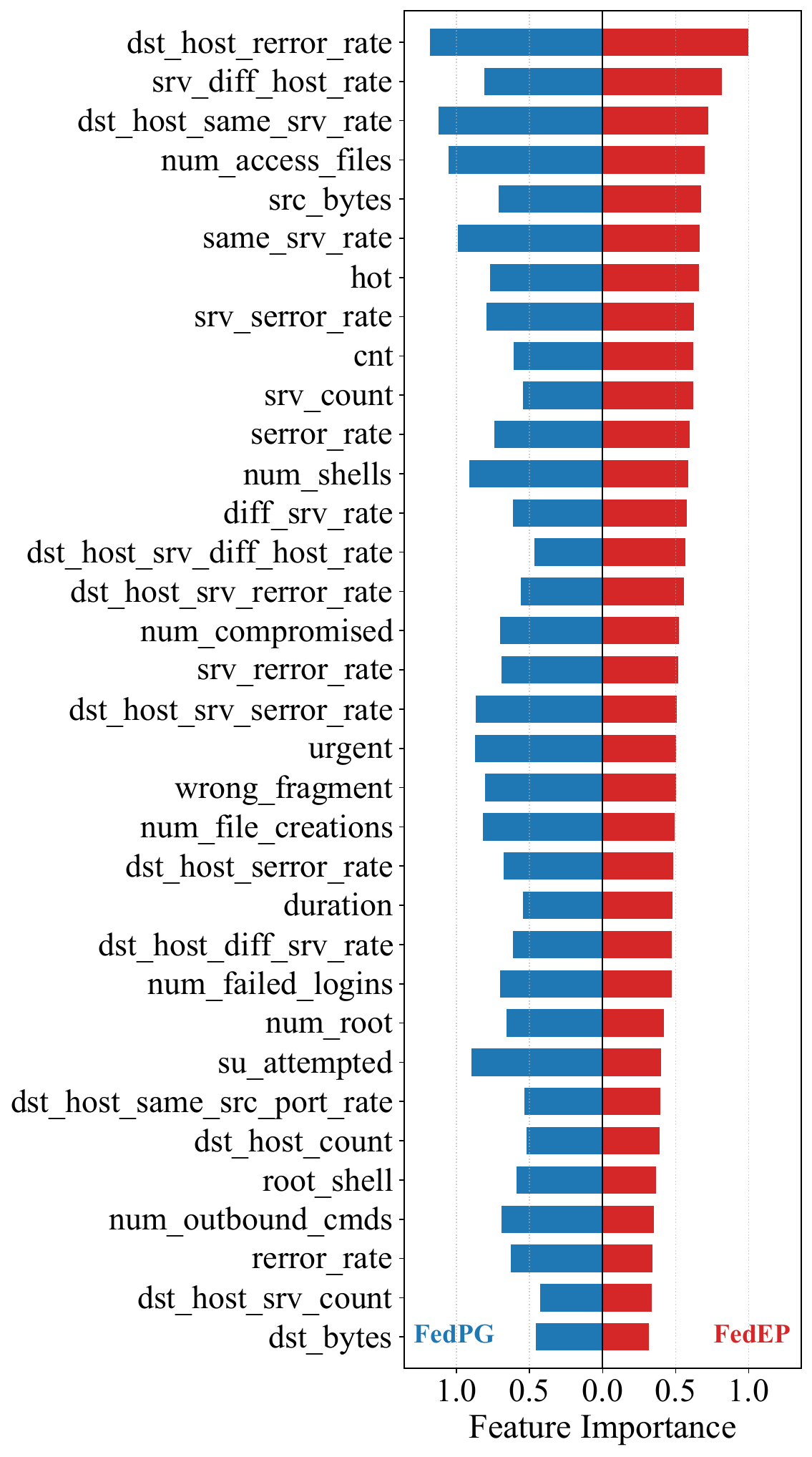}
    }
    
    \caption{Feature importance visualization on (a) TON-IoT, (b) UNSW-NB15, (c) NSL-KDD.}
    \label{fig:three_butterflies}
\end{figure*}

To investigate the impact of the proposed sparse regularization on model interpretability, the feature importance, represented by the sum of absolute weights, is visualized in Fig. \ref{fig:three_butterflies}.
As shown in the red distribution, FedEP exhibits a highly concentrated weight assignment, where only a small subset of key features carries significant weight. In contrast, the baseline FedPG shown in blue assigns non-negligible weights to a much broader range of secondary features. This indicates that FedEP effectively filters out redundant information and suppresses the influence of noise, which is crucial for preventing overfitting in heterogeneous IoT environments.

\subsubsection{Ablation Experiments}


\begin{table}[t]
    \centering
    \caption{Ablation study on TON-IoT.} 
    \label{tab:ablation_ton_left}
    \renewcommand\arraystretch{1.3}
    \begin{tabular}{|W|Z|Z|Z|Z|}
        \hline
        Metrics & Case I & Case II & Case III & Case IV \\
        \hline \hline
        Acc  ($\uparrow$)     & 89.63\% & 89.63\% & 89.71\% & \textbf{90.48\%} \\ \hline
        Pre  ($\uparrow$)     & 91.60\% & 91.60\% & 91.68\% & \textbf{92.48\%} \\ \hline
        Recall ($\uparrow$)   & 96.66\% & 96.66\% & 96.66\% & \textbf{96.67\%} \\ \hline
        FNR    ($\downarrow$)   & 3.34\%  & 3.34\%  & 3.34\%  & \textbf{3.33\%}  \\ \hline
        F1-score ($\uparrow$) & 94.06\% & 94.06\% & 94.11\% & \textbf{94.52\%} \\ \hline
    \end{tabular}
\end{table}

\begin{table}[t]
    \centering
    \caption{Ablation study on UNSW-NB15.} 
    \label{tab:ablation_ton_right}
    \renewcommand\arraystretch{1.3}
    \begin{tabular}{|W|Z|Z|Z|Z|}
        \hline
        Metrics & Case I & Case II & Case III & Case IV \\
        \hline \hline
        Acc    ($\uparrow$)    & 77.84\% & 81.61\% & 75.74\% & \textbf{83.31\%} \\ \hline
        Pre     ($\uparrow$)   & 75.79\% & 79.34\% & 74.09\% & \textbf{82.17\%} \\ \hline
        Recall  ($\uparrow$)   & \textbf{100.00\%} & 99.37\% & \textbf{100.00\%} & 97.00\% \\ \hline
        FNR     ($\downarrow$)   & \textbf{0.00\%}  & 0.63\%  & \textbf{0.00\%}  & 3.00\%  \\ \hline
        F1-score ($\uparrow$) & 86.23\% & 88.23\% & 85.12\% & \textbf{88.97\%} \\ \hline
    \end{tabular}
\end{table}

\begin{table}[t]
    \centering
    \caption{Ablation study on NSL-KDD.} 
    \label{tab:ablation_kdd}
    \renewcommand\arraystretch{1.3}
    \begin{tabular}{|W|Z|Z|Z|Z|}
        \hline
        Metrics  & Case I & Case II & Case III & Case IV \\
        \hline \hline
        Acc    ($\uparrow$)    & 84.09\% & 84.20\% & 84.12\% & \textbf{84.24\%} \\ \hline
        Pre     ($\uparrow$)   & 89.56\% & 89.62\% & \textbf{89.66\%} & \textbf{89.66\%} \\ \hline
        Recall  ($\uparrow$)   & 81.56\% & 81.72\% & 81.50\% & \textbf{81.75\%} \\ \hline
        FNR     ($\downarrow$)   & 18.44\%  & 18.28\%  & 18.50\%  & \textbf{18.25\%}  \\ \hline
        F1-score ($\uparrow$) & 85.38\% & 85.49\% & 85.39\% & \textbf{85.52\%} \\ \hline
    \end{tabular}
\end{table}

\begin{figure*}[t]
	\centering
	
	\subfigure[TON-IoT]{
		\includegraphics[width=0.3\textwidth]{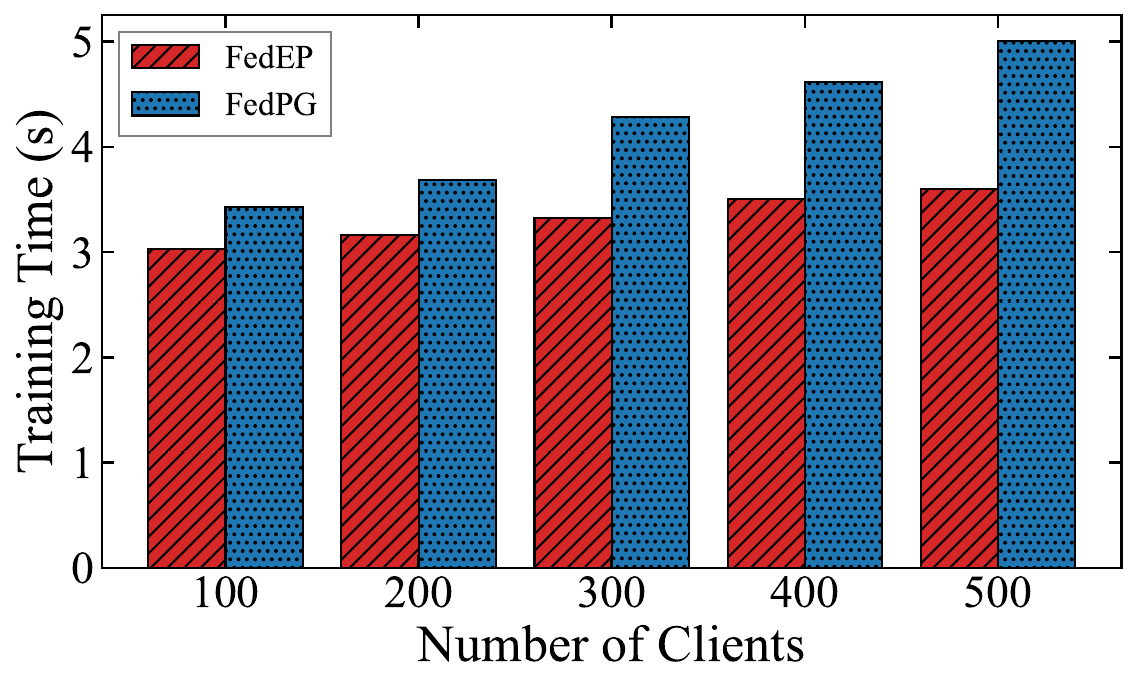}
		\label{fig:ton_method}
	}\hfill
	\subfigure[UNSW-NB15]{
		\includegraphics[width=0.3\textwidth]{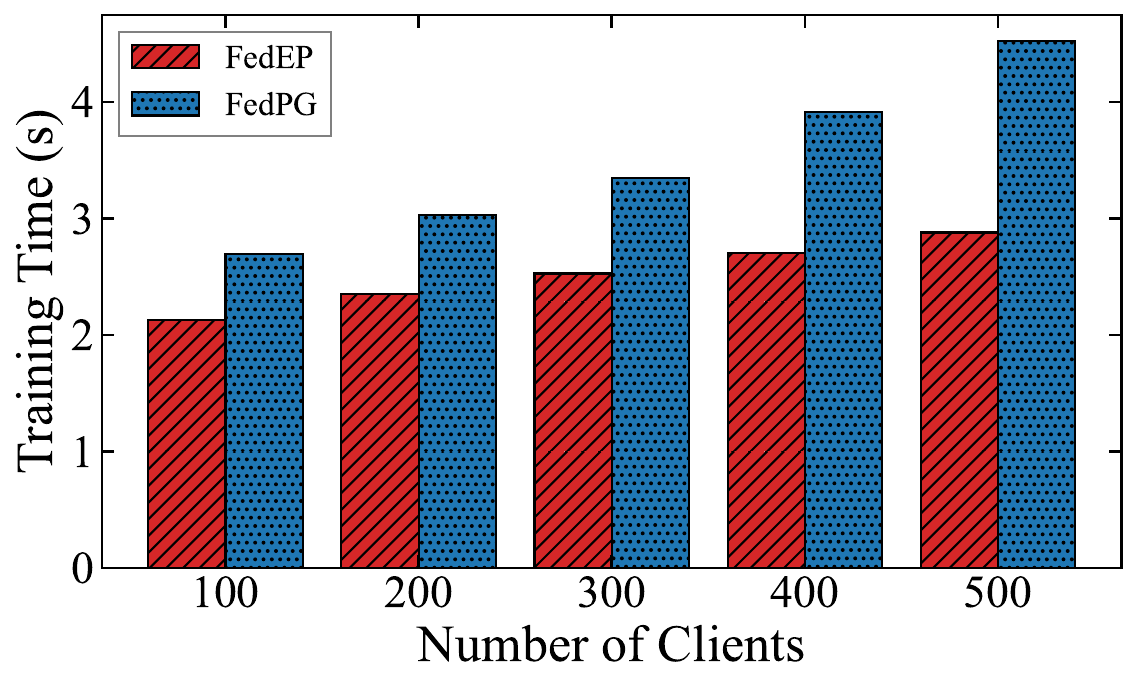}
		\label{fig:unsw_method}
	}\hfill
	\subfigure[NSL-KDD]{
		\includegraphics[width=0.3\textwidth]{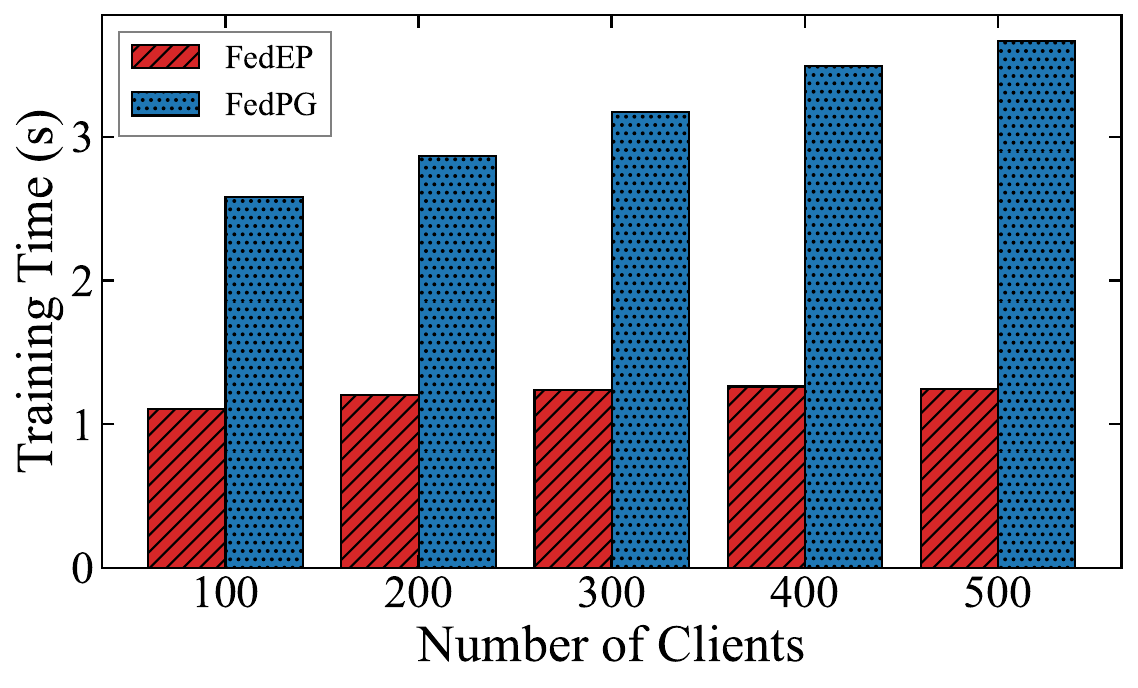}
		\label{fig:kdd_method}
	} 
	\vspace{-2mm}
	\caption{Training time per round as the number of clients scales on (a) TON-IoT, (b) UNSW-NB15, (c) NSL-KDD.}
	\label{fig:overall_results_efficiency1}
\end{figure*}

\begin{figure*}[t]
	\centering
	\subfigure[TON-IoT]{
		\includegraphics[width=0.3\textwidth]{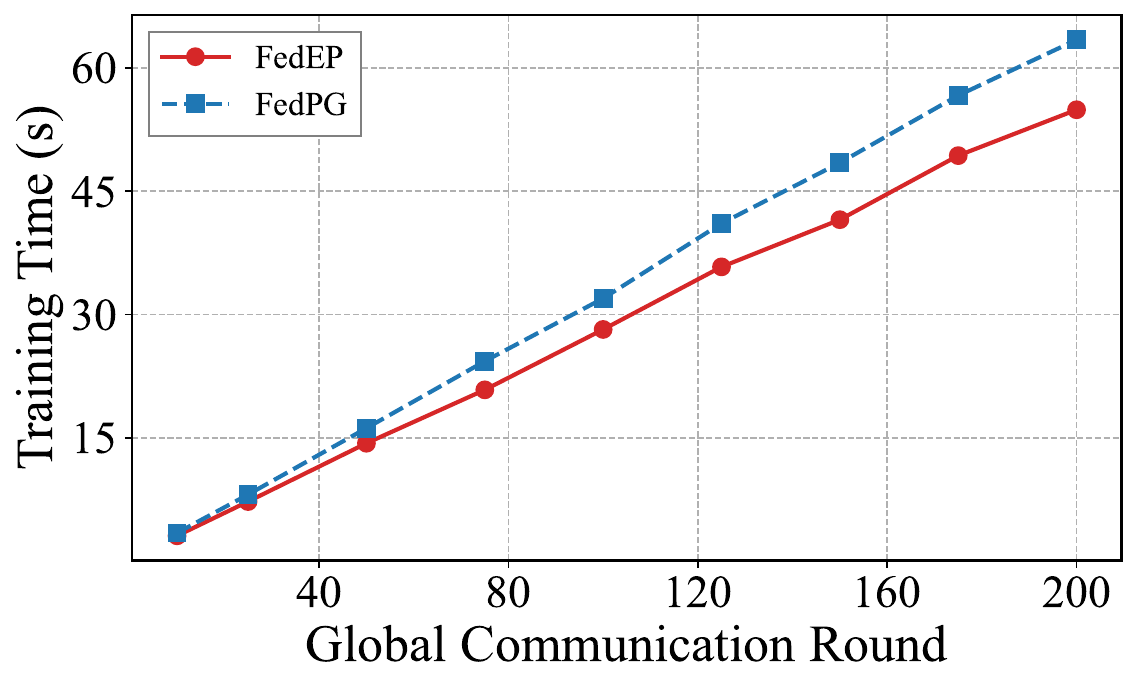}
		\label{fig:ton_time}
	}\hfill
	\subfigure[UNSW-NB15]{
		\includegraphics[width=0.3\textwidth]{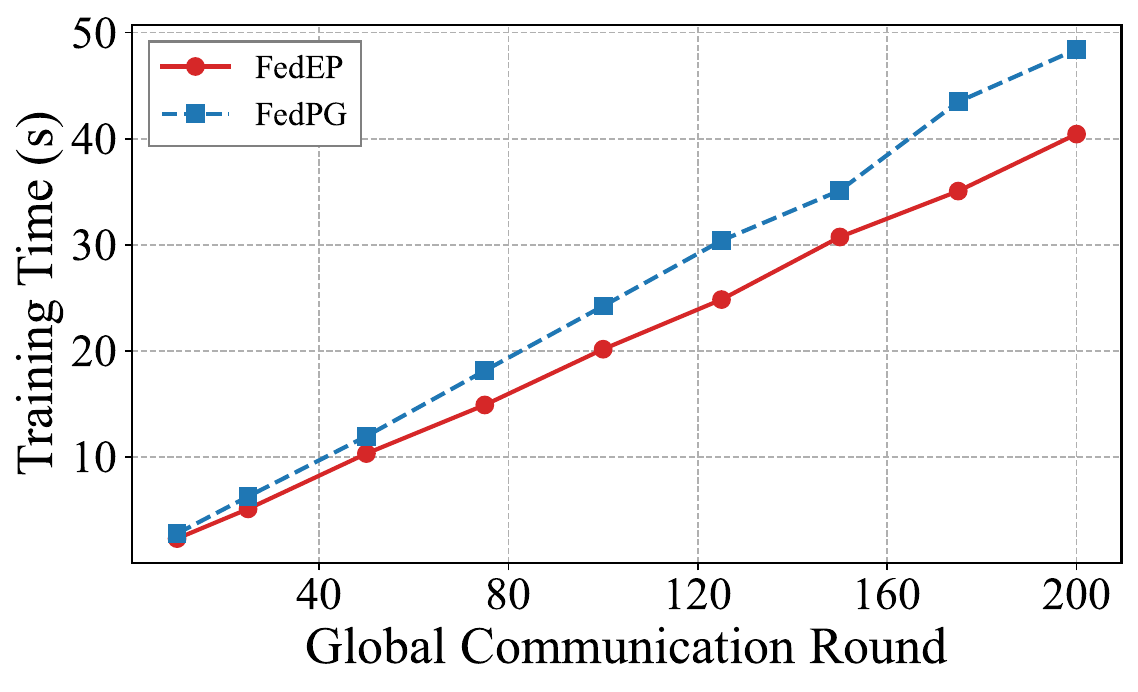}
		\label{fig:unsw_time}
	}\hfill
	\subfigure[NSL-KDD]{
		\includegraphics[width=0.3\textwidth]{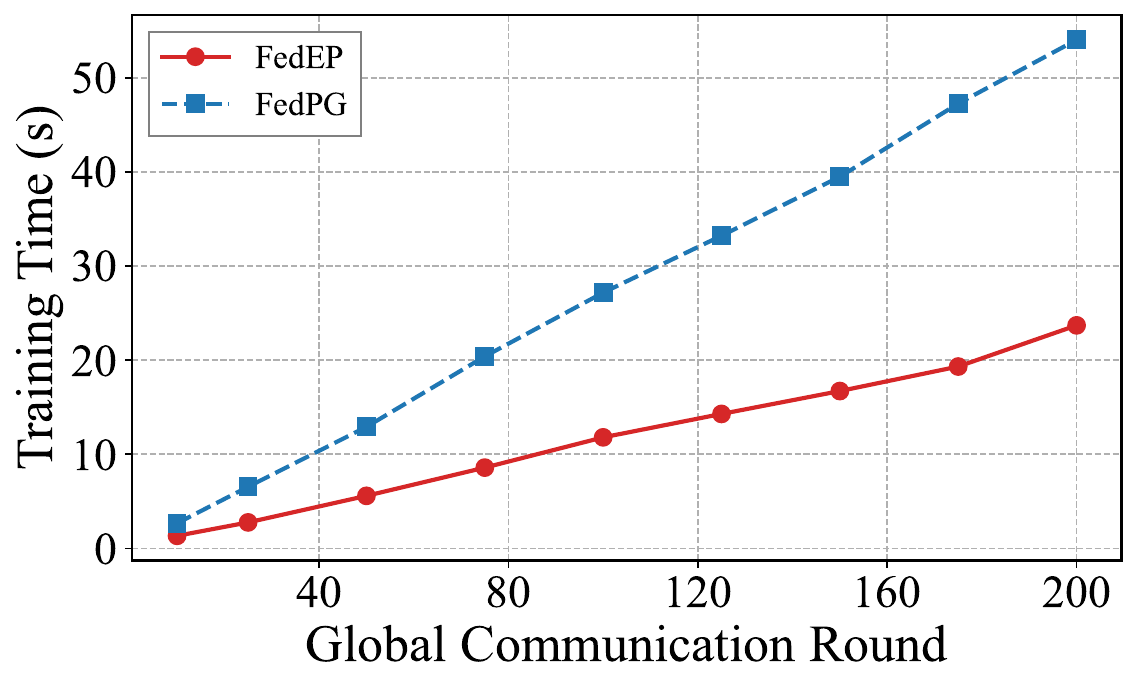}
		\label{fig:kdd_time}
	}
	\vspace{-2mm}
	\caption{Cumulative training time over communication rounds on  (a) TON-IoT, (b) UNSW-NB15, (c) NSL-KDD.}
	\label{fig:overall_results_efficiency2}
\end{figure*}

Ablation studies are conducted across four configurations to verify the individual contributions of each component in the proposed FedEP.

\begin{itemize}
    \item \textbf{Case I:} It removes both the outlier-capture term $S_i$ and the structural sparsity term $\|W_i\|_{2,1}$, i.e.,
    \begin{equation}
        \begin{array}{cl}
            \displaystyle \min_{\{W_i\}, V} & \sum_{i=1}^d \| (I - W_i W_i^{\top}) X_i \|_\textrm{F}^2 \\
            \textrm{s.t.} & W_i = V, \quad W_i^{\top} W_i = I, \quad \forall i \in [d].
        \end{array}
    \end{equation}

    \item \textbf{Case II:} It incorporates the sparse error matrix $S_i$ but omits the row-wise sparsity constraint, i.e.,
    \begin{equation}
        \begin{array}{cl}
            \displaystyle \min_{\{W_i\}, \{S_i\}, V} & \sum_{i=1}^d (\| (I - W_i W_i^{\top})(X_i - S_i) \|_\textrm{F}^2 + \alpha \|S_i\|_1) \\
            \textrm{s.t.} & W_i = V, \quad W_i^{\top} W_i = I, \quad \forall i \in [d].
        \end{array}
    \end{equation}

    \item \textbf{Case III:} It focuses on joint feature selection via the $\ell_{2,1}$-norm, i.e.,
    \begin{equation}
        \begin{array}{cl}
            \displaystyle \min_{\{W_i\}, V}  & \sum_{i=1}^d (\| (I - W_i W_i^{\top}) X_i \|_\textrm{F}^2 + \beta \|W_i\|_{2,1}) \\
            \textrm{s.t.} & W_i = V, \quad W_i^{\top} W_i = I, \quad \forall i \in [d].
        \end{array}
    \end{equation}

    \item \textbf{Case IV:} It is the complete objective function in \eqref{eq:proposed_method}.
\end{itemize}

On TON-IoT, the complete model (Case IV) achieves the highest accuracy, i.e., 90.48\%, and F1-score, i.e., 94.52\%. 
Although the performance gains over Case I - Case III are incremental, the consistent improvement across all metrics indicates that the integration of error decomposition and row-wise sparsity enhances representation stability.
The effectiveness of our proposed FedEP is particularly evident on UNSW-NB15. 
While Cases I and III yield a 100\% recall, their low accuracy and precision suggest a tendency toward over-prediction. 
In contrast, Case IV achieves a more balanced performance with the highest accuracy and F1-score, demonstrating that the $\ell_{2,1}$-norm filters redundant features and the sparse matrix $S_i$ effectively mitigates outlier interference. 
On NSL-KDD, Case IV maintains marginal superiority with 84.24\% accuracy and 85.52\% F1-score. 
The results of all three datasets confirm that the combination of personalization and sparsity enables FedEP to maintain robust detection performance in complex scenarios while processing redundant or corrupted local data.

\subsection{Efficiency Analysis}

\subsubsection{Training Time}

Fig. \ref{fig:overall_results_efficiency1} illustrates the training time per round as the number of clients scales from 100 to 500. It can be seen that FedEP is computationally faster than FedPG, even achieving nearly a 2x improvement on NSL-KDD.
Moreover, Fig. \ref{fig:overall_results_efficiency2} presents a comprehensive comparison of the cumulative training time.
Although both methods exhibit a linear growth relative to the global communication rounds, FedEP demonstrates superior efficiency on the three datasets.
These results indicate that the proposed FedEP effectively mitigates computational overhead across diverse network environments, making it highly suitable for resource-constrained federated IoT environments.

\subsubsection{Impact of Rank Selection}

\begin{table}[t]
    \centering
    \caption{Impact of Rank selection  on TON-IoT.}
    \label{tab:rank_ton}
    \setlength{\tabcolsep}{6pt} 
    \renewcommand\arraystretch{1.3}
    \begin{tabular}{|c|c|c|c|c|}
        \hline
        Rank & \multicolumn{2}{c|}{Acc  ($\uparrow$) } & \multicolumn{2}{c|}{F1-score  ($\uparrow$) } \\
        \cline{2-5}
        (r)  & FedPG & FedEP & FedPG & FedEP \\
        \hline \hline 
        5  & 88.82\% & \textbf{90.48\%} & 93.63\% & \textbf{94.52\%} \\ \hline
        10 & 87.72\% & \textbf{89.69\%} & 93.05\% & \textbf{94.09\%} \\ \hline
        15 & 79.84\% & \textbf{89.74\%} & 87.75\% & \textbf{94.12\%} \\ \hline
        20 & 87.98\% & \textbf{89.92\%} & 93.18\% & \textbf{94.22\%} \\ \hline
        25 & 89.10\% & \textbf{89.38\%} & 93.78\% & \textbf{93.93\%} \\ \hline
        30 & 86.74\% & \textbf{88.42\%} & 92.56\% & \textbf{93.39\%} \\ \hline
    \end{tabular}
\end{table}

\begin{table}[t]
    \centering
    \caption{Impact of Rank selection  on UNSW-NB15.}
    \label{tab:rank_unsw}
    \setlength{\tabcolsep}{6pt} 
    \renewcommand\arraystretch{1.3}
    \begin{tabular}{|c|c|c|c|c|}
        \hline
        Rank & \multicolumn{2}{c|}{Acc ($\uparrow$) } & \multicolumn{2}{c|}{F1-score ($\uparrow$)} \\
        \cline{2-5}
        (r)  & FedPG & FedEP & FedPG & FedEP \\
        \hline \hline
        5  & 80.97\% & \textbf{83.22\%} & 87.24\% & \textbf{89.01\%} \\ \hline
        10 & \textbf{81.19\%} & 77.24\% & \textbf{88.05\%} & 85.91\% \\ \hline
        15 & \textbf{79.85\%} & 72.88\% & \textbf{86.90\%} & 83.65\% \\ \hline
        20 & \textbf{82.04\%} & 81.77\% & 87.15\% & \textbf{88.14\%} \\ \hline
        25 & \textbf{77.81\%} & 76.73\% & 84.54\% & \textbf{85.64\%} \\ \hline
        30 & \textbf{72.94\%} & 70.07\% & 80.39\% & \textbf{82.26\%} \\ \hline
    \end{tabular}
\end{table}

\begin{table}[t]
    \centering
    \caption{Impact of Rank selection on NSL-KDD.}
    \label{tab:rank_kdd}
    \setlength{\tabcolsep}{6pt} 
    \renewcommand\arraystretch{1.3}
    \begin{tabular}{|c|c|c|c|c|}
        \hline
        Rank & \multicolumn{2}{c|}{Acc  ($\uparrow$) } & \multicolumn{2}{c|}{F1-score  ($\uparrow$)} \\
        \cline{2-5}
        (r)  & FedPG & FedEP & FedPG & FedEP \\
        \hline \hline
        5  & 84.09\% & \textbf{84.24\%} & 85.33\% & \textbf{85.52\%} \\ \hline
        10 & 83.42\% & \textbf{83.95\%} & 84.87\% & \textbf{85.23\%} \\ \hline
        15 & \textbf{83.88\%} & 81.23\% & \textbf{85.13\%} & 83.80\% \\ \hline
        20 & \textbf{83.48\%} & 83.15\% & 84.80\% & \textbf{85.05\%} \\ \hline
        25 & 80.99\% & \textbf{84.53\%} & 81.99\% & \textbf{85.97\%} \\ \hline
        30 & 75.62\% & \textbf{83.34\%} & 75.43\% & \textbf{84.44\%} \\ \hline
    \end{tabular}
\end{table}

Table \ref{tab:rank_ton} - Table \ref{tab:rank_kdd} illustrate the impact of rank $r \in [5, 30]$ on detection performance. Across all datasets, FedEP achieves its optimal performance at a low rank ($r=5$), suggesting that the intrinsic structure of IoT traffic is effectively captured by a compact subspace. 
As $r$ increases, both methods experience performance degradation, as a higher-dimensional subspace tends to overfit noise and outliers, thereby blurring the boundary between normal and anomalous samples. However, FedEP demonstrates superior structural stability compared to FedPG. For instance, while FedPG's accuracy on NSL-KDD plummets to 75.62\% at $r=30$, FedEP maintains a significantly more robust 83.34\%. 
Even under suboptimal rank selections, FedEP retains higher F1-scores, as the sparse component $S_i$ effectively absorbs residual noise and prevents it from distorting the estimated subspace. In summary, FedEP exhibits greater resilience to over-parameterization, validating its efficacy in modeling complex federated data.

\section{Conclusion}\label{conclusions}

In this paper, we propose an efficient personalized federated PCA (FedEP) framework for anomaly detection in IoT networks. Unlike existing federated PCA methods that enforce strict consensus constraints, FedEP enables local gateways to maintain personalized model parameters while benefiting from global knowledge sharing. The integration of robust sparse regularization with $\ell_{2,1}$-norm and $\ell_1$-norm enhances the model's ability to handle noise and outliers in heterogeneous IoT environments. 
Furthermore, we develop an efficient manifold optimization algorithm based on ADMM with theoretical convergence guarantees. 
Numerical experiments verify that our proposed FedEP  achieves better accuracy and higher computational efficiency compared to the benchmark FedPG.

In the future, we are interested in extending FedEP to nonlinear models using kernel-based techniques or deep learning. Additionally,  investigating the compressibility of personalized models to reduce communication overhead is crucial for scalable deployment on resource-constrained IoT devices.

\bibliographystyle{IEEEtran}
\bibliography{mybibfile}

\begin{thebibliography}{10}
\providecommand{\url}[1]{#1}
\csname url@samestyle\endcsname
\providecommand{\newblock}{\relax}
\providecommand{\bibinfo}[2]{#2}
\providecommand{\BIBentrySTDinterwordspacing}{\spaceskip=0pt\relax}
\providecommand{\BIBentryALTinterwordstretchfactor}{4}
\providecommand{\BIBentryALTinterwordspacing}{\spaceskip=\fontdimen2\font plus
\BIBentryALTinterwordstretchfactor\fontdimen3\font minus
  \fontdimen4\font\relax}
\providecommand{\BIBforeignlanguage}[2]{{%
\expandafter\ifx\csname l@#1\endcsname\relax
\typeout{** WARNING: IEEEtran.bst: No hyphenation pattern has been}%
\typeout{** loaded for the language `#1'. Using the pattern for}%
\typeout{** the default language instead.}%
\else
\language=\csname l@#1\endcsname
\fi
#2}}
\providecommand{\BIBdecl}{\relax}
\BIBdecl

\bibitem{aouedi2024survey}
O.~Aouedi, T.-H. Vu, A.~Sacco, D.~C. Nguyen, K.~Piamrat, G.~Marchetto, and
  Q.-V. Pham, ``A survey on intelligent {I}nternet of things: Applications,
  security, privacy, and future directions,'' \emph{IEEE Communications Surveys
  \& Tutorials}, vol.~27, no.~2, pp. 1238--1292, 2025.

\bibitem{rejeb2022big}
A.~Rejeb, K.~Rejeb, S.~Simske, H.~Treiblmaier, and S.~Zailani, ``The big
  picture on the {I}nternet of things and the smart city: A review of what we
  know and what we need to know,'' \emph{Internet of Things}, vol.~19, p.
  100565, 2022.

\bibitem{zhang2025internet}
L.~Zhang, I.~K. Dabipi, and W.~L. Brown~Jr, ``Internet of things applications
  for agriculture,'' \emph{Internet of Things A to Z: Technologies and
  Applications}, pp. 327--343, 2025.

\bibitem{bollineni2025iot}
C.~Bollineni, M.~Sharma, A.~Hazra, P.~Kumari, S.~Manipriya, and A.~Tomar, ``Iot
  for next-generation smart healthcare: A comprehensive survey,'' \emph{IEEE
  Internet of Things Journal}, vol.~12, no.~16, pp. 32\,616--32\,639, 2025.

\bibitem{yalli2025systematic}
J.~S. Yalli, M.~H. Hasan, L.~T. Jung, A.~I. Yerima, D.~A. Aliyu, U.~D. Maiwada,
  S.~M. Al-Selwi, and M.~U. Shaikh, ``A systematic review for evaluating {IoT}
  security: A focus on authentication, protocols and enabling technologies,''
  \emph{IEEE Internet of Things Journal}, vol.~12, no.~11, pp.
  18\,908--18\,928, 2025.

\bibitem{tu2025distributed}
J.~Tu, L.~Yang, and J.~Cao, ``Distributed machine learning in edge computing:
  Challenges, solutions and future directions,'' \emph{ACM Computing Surveys},
  vol.~57, no.~5, pp. 1--37, 2025.

\bibitem{huang2025deep}
H.~Huang, P.~Wang, J.~Pei, J.~Wang, S.~Alexanian, and D.~Niyato, ``Deep
  learning advancements in anomaly detection: A comprehensive survey,''
  \emph{IEEE Internet of Things Journal}, vol.~12, no.~21, pp.
  44\,318--44\,342, 2025.

\bibitem{inuwa2024comparative}
M.~M. Inuwa and R.~Das, ``A comparative analysis of various machine learning
  methods for anomaly detection in cyber attacks on {IoT} networks,''
  \emph{Internet of Things}, vol.~26, p. 101162, 2024.

\bibitem{adhikari2024recent}
D.~Adhikari, W.~Jiang, J.~Zhan, D.~B. Rawat, and A.~Bhattarai, ``Recent
  advances in anomaly detection in internet of things: Status, challenges, and
  perspectives,'' \emph{Computer Science Review}, vol.~54, p. 100665, 2024.

\bibitem{zhou2017security}
J.~Zhou, Z.~Cao, X.~Dong, and A.~V. Vasilakos, ``Security and privacy for
  cloud-based {IoT}: Challenges,'' \emph{IEEE Communications Magazine},
  vol.~55, no.~1, pp. 26--33, 2017.

\bibitem{trilles2024anomaly}
S.~Trilles, S.~S. Hammad, and D.~Iskandaryan, ``Anomaly detection based on
  artificial intelligence of things: A systematic literature mapping,''
  \emph{Internet of Things}, vol.~25, p. 101063, 2024.

\bibitem{kounoudes2020mapping}
A.~D. Kounoudes and G.~M. Kapitsaki, ``A mapping of {IoT} user-centric privacy
  preserving approaches to the {GDPR},'' \emph{Internet of Things}, vol.~11, p.
  100179, 2020.

\bibitem{ieracitano2020novel}
C.~Ieracitano, A.~Adeel, F.~C. Morabito, and A.~Hussain, ``A novel statistical
  analysis and autoencoder driven intelligent intrusion detection approach,''
  \emph{Neurocomputing}, vol. 387, pp. 51--62, 2020.

\bibitem{zulfiqar2024deepdetect}
Z.~Zulfiqar, S.~U. Malik, S.~A. Moqurrab, Z.~Zulfiqar, U.~Yaseen, and
  G.~Srivastava, ``Deep{D}etect: An innovative hybrid deep learning framework
  for anomaly detection in {IoT} networks,'' \emph{Journal of Computational
  Science}, vol.~83, p. 102426, 2024.

\bibitem{mcmahan2017communication}
B.~McMahan, E.~Moore, D.~Ramage, S.~Hampson, and B.~A. y~Arcas,
  ``Communication-efficient learning of deep networks from decentralized
  data,'' in \emph{Artificial Intelligence and Statistics}.\hskip 1em plus
  0.5em minus 0.4em\relax PMLR, 2017, pp. 1273--1282.

\bibitem{nguyen2021federated}
D.~C. Nguyen, M.~Ding, P.~N. Pathirana, A.~Seneviratne, J.~Li, and H.~V. Poor,
  ``Federated learning for {I}nternet of things: A comprehensive survey,''
  \emph{IEEE Communications Surveys \& Tutorials}, vol.~23, no.~3, pp.
  1622--1658, 2021.

\bibitem{zhang2024privacy}
Y.~Zhang, B.~Suleiman, M.~J. Alibasa, and F.~Farid, ``Privacy-aware anomaly
  detection in {IoT} environments using {FedGroup}: A group-based federated
  learning approach,'' \emph{Journal of Network and Systems Management},
  vol.~32, no.~1, p.~20, 2024.

\bibitem{tang2022computational}
S.~Tang, L.~Chen, K.~He, J.~Xia, L.~Fan, and A.~Nallanathan, ``Computational
  intelligence and deep learning for next-generation edge-enabled industrial
  {IoT},'' \emph{IEEE Transactions on Network Science and Engineering},
  vol.~10, no.~5, pp. 2881--2893, 2023.

\bibitem{han2019federated}
Y.~Han, D.~Li, H.~Qi, J.~Ren, and X.~Wang, ``Federated learning-based
  computation offloading optimization in edge computing-supported {I}nternet of
  things,'' in \emph{Proceedings of the ACM Turing Celebration
  Conference-China}, 2019, pp. 1--5.

\bibitem{zhou2024reconstructed}
X.~Zhou, J.~Wu, W.~Liang, K.~I.-K. Wang, Z.~Yan, L.~T. Yang, and Q.~Jin,
  ``Reconstructed graph neural network with knowledge distillation for
  lightweight anomaly detection,'' \emph{IEEE Transactions on Neural Networks
  and Learning Systems}, vol.~35, no.~9, pp. 11\,817--11\,828, 2024.

\bibitem{carter2022fast}
J.~Carter, S.~Mancoridis, and E.~Galinkin, ``Fast, lightweight {IoT} anomaly
  detection using feature pruning and {PCA},'' in \emph{Proceedings of the 37th
  ACM/SIGAPP Symposium on Applied Computing}, 2022, pp. 133--138.

\bibitem{xiu2025bi}
X.~Xiu, C.~Huang, P.~Shang, and W.~Liu, ``Bi-sparse unsupervised feature
  selection,'' \emph{IEEE Transactions on Image Processing}, vol.~34, pp.
  7407--7421, 2025.

\bibitem{grammenos2020federated}
A.~Grammenos, R.~Mendoza~Smith, J.~Crowcroft, and C.~Mascolo, ``Federated
  principal component analysis,'' \emph{Advances in Neural Information
  Processing Systems}, vol.~33, pp. 6453--6464, 2020.

\bibitem{nguyen2024federated}
T.-A. Nguyen, L.~T. Le, T.~D. Nguyen, W.~Bao, S.~Seneviratne, C.~S. Hong, and
  N.~H. Tran, ``Federated {PCA} on {G}rassmann manifold for {IoT} anomaly
  detection,'' \emph{IEEE/ACM Transactions on Networking}, vol.~32, no.~5, pp.
  4456--4471, 2024.

\bibitem{wang2023high}
K.~Wang and Z.~Song, ``High-dimensional cross-plant process monitoring with
  data privacy: A federated hierarchical sparse {PCA} approach,'' \emph{IEEE
  Transactions on Industrial Informatics}, vol.~20, no.~3, pp. 4385--4396,
  2024.

\bibitem{luo2024privacy}
G.~Luo, N.~Chen, J.~He, B.~Jin, Z.~Zhang, and Y.~Li, ``Privacy-preserving
  clustering federated learning for {non-IID} data,'' \emph{Future Generation
  Computer Systems}, vol. 154, pp. 384--395, 2024.

\bibitem{hoang2018pca}
D.~H. Hoang and H.~D. Nguyen, ``A {PCA}-based method for {IoT} network traffic
  anomaly detection,'' in \emph{2018 20th International Conference on Advanced
  Communication Technology (ICACT)}.\hskip 1em plus 0.5em minus 0.4em\relax
  IEEE, 2018, pp. 381--386.

\bibitem{chen2021bridging}
Y.~Chen, J.~Fan, C.~Ma, and Y.~Yan, ``Bridging convex and nonconvex
  optimization in robust {PCA}: Noise, outliers, and missing data,''
  \emph{Annals of Statistics}, vol.~49, no.~5, p. 2948, 2021.

\bibitem{rousseeuw2018anomaly}
P.~J. Rousseeuw and M.~Hubert, ``Anomaly detection by robust statistics,''
  \emph{Wiley Interdisciplinary Reviews: Data Mining and Knowledge Discovery},
  vol.~8, no.~2, p. e1236, 2018.

\bibitem{liu2025similarity}
Z.~Liu, H.~Lv, X.~Liu, C.~Ma, F.~Wu, L.~Liu, and L.~Cui, ``Similarity and
  diversity: {PCA}-based contribution evaluation in federated learning,''
  \emph{IEEE Internet of Things Journal}, vol.~12, no.~12, pp.
  20\,393--20\,405, 2025.

\bibitem{sun2023learning}
J.~Sun, X.~Xiu, Z.~Luo, and W.~Liu, ``Learning high-order multi-view
  representation by new tensor canonical correlation analysis,'' \emph{IEEE
  Transactions on Circuits and Systems for Video Technology}, vol.~33, no.~10,
  pp. 5645--5654, 2023.

\bibitem{liu2024towards}
J.~Liu, M.~Feng, X.~Xiu, and W.~Liu, ``Towards robust and sparse linear
  discriminant analysis for image classification,'' \emph{Pattern Recognition},
  vol. 153, p. 110512, 2024.

\bibitem{zou2006sparse}
H.~Zou, T.~Hastie, and R.~Tibshirani, ``Sparse principal component analysis,''
  \emph{Journal of Computational and Graphical Statistics}, vol.~15, no.~2, pp.
  265--286, 2006.

\bibitem{chen2020alternating}
S.~Chen, S.~Ma, L.~Xue, and H.~Zou, ``An alternating manifold proximal gradient
  method for sparse principal component analysis and sparse canonical
  correlation analysis,'' \emph{INFORMS Journal on Optimization}, vol.~2,
  no.~3, pp. 192--208, 2020.

\bibitem{zhu2025sparse}
Y.~Zhu, W.~Liu, X.~Xiu, and J.~Sun, ``Sparse tensor {CCA} via manifold
  optimization for multi-view learning,'' \emph{IEEE Transactions on Circuits
  and Systems for Video Technology}, 2025.

\bibitem{xiao2018regularized}
X.~Xiao, Y.~Li, Z.~Wen, and L.~Zhang, ``A regularized semi-smooth {N}ewton
  method with projection steps for composite convex programs,'' \emph{Journal
  of Scientific Computing}, vol.~76, no.~1, pp. 364--389, 2018.

\bibitem{donoho1995noising}
D.~L. Donoho, ``De-noising by soft-thresholding,'' \emph{IEEE Transactions on
  Information Theory}, vol.~41, no.~3, pp. 613--627, 1995.

\bibitem{hager1989updating}
W.~W. Hager, ``Updating the inverse of a matrix,'' \emph{SIAM Review}, vol.~31,
  no.~2, pp. 221--239, 1989.

\bibitem{boyd2011distributed}
S.~Boyd, N.~Parikh, E.~Chu, B.~Peleato, J.~Eckstein \emph{et~al.},
  ``Distributed optimization and statistical learning via the alternating
  direction method of multipliers,'' \emph{Foundations and
  Trends{\textregistered} in Machine learning}, vol.~3, no.~1, pp. 1--122,
  2011.

\bibitem{li2025riemannian}
J.~Li, S.~Ma, and T.~Srivastava, ``A {R}iemannian alternating direction method
  of multipliers,'' \emph{Mathematics of Operations Research}, vol.~50, no.~4,
  pp. 3222--3242, 2025.

\bibitem{booij2021ton_iot}
T.~M. Booij, I.~Chiscop, E.~Meeuwissen, N.~Moustafa, and F.~T. Den~Hartog,
  ``{ToN}\_{IoT}: The role of heterogeneity and the need for standardization of
  features and attack types in {IoT} network intrusion data sets,'' \emph{IEEE
  Internet of Things Journal}, vol.~9, no.~1, pp. 485--496, 2021.

\bibitem{moustafa2015unsw}
N.~Moustafa and J.~Slay, ``{UNSW-NB15}: A comprehensive data set for network
  intrusion detection systems ({UNSW-NB15} network data set),'' in \emph{2015
  Military Communications and Information Systems Conference (MilCIS)}.\hskip
  1em plus 0.5em minus 0.4em\relax IEEE, 2015, pp. 1--6.

\bibitem{tavallaee2009detailed}
M.~Tavallaee, E.~Bagheri, W.~Lu, and A.~A. Ghorbani, ``A detailed analysis of
  the {KDD} {CUP} 99 data set,'' in \emph{2009 IEEE Symposium on Computational
  Intelligence for Security and Defense Applications}.\hskip 1em plus 0.5em
  minus 0.4em\relax IEEE, 2009, pp. 1--6.

\end{thebibliography}

\end{document}